\documentclass[lettersize,journal,peerreview]{IEEEtran}
\usepackage{amsmath,amsfonts}
\usepackage{algorithmic}
\usepackage{algorithm}
\usepackage{array}
\usepackage[caption=false,font=normalsize,labelfont=sf,textfont=sf]{subfig}
\usepackage{textcomp}
\usepackage{stfloats}
\usepackage{url}
\usepackage{verbatim}
\usepackage{graphicx}
\usepackage{cite}

\usepackage{resizegather}
\usepackage{xcolor}
\usepackage{amsmath}
\usepackage{amssymb}
\usepackage{amsthm}
\newtheorem{theorem}{Theorem}

\newtheorem{lemma}{Lemma}
\theoremstyle{definition}
\newtheorem{definition}{Definition}

\newtheorem{assumption}{Assumption}
\usepackage{booktabs}
\usepackage{adjustbox}
\usepackage{numprint}
\usepackage{hyperref}

\hyphenation{op-tical net-works semi-conduc-tor IEEE-Xplore}

\begin{document}

\title{Sample-wise Constrained Learning via a Sequential Penalty Approach with Applications in Image Processing}
 \author{ Francesca Lanzillotta, Chiara Albisani, Davide Pucci, Daniele Baracchi, Alessandro Piva, Matteo Lapucci
\thanks{The authors are with the Department of Information Engineering (DINFO) of the University of Florence, via di S. Marta 3, Firenze, 50139, Italy. Corresponding author: M. Lapucci (e-mail: matteo.lapucci@unifi.it)}
\thanks{Manuscript received XXX.}
}

 \markboth{}%
 {Lanzillotta \MakeLowercase{\textit{et al.}}: Sample-wise Constrained Learning via a Sequential Penalty Approach}

\IEEEpubid{0000--0000/00\$00.00~\copyright~2021 IEEE}

\maketitle

\begin{abstract}
In many learning tasks, certain requirements on the processing of individual data samples should arguably be formalized as strict constraints in the underlying optimization problem, rather than by means of arbitrary penalties. We show that, in these scenarios, learning can be carried out exploiting a sequential penalty method that allows to properly deal with constraints. The proposed algorithm is shown to possess convergence guarantees under assumptions that are reasonable in deep learning scenarios. Moreover, the results of experiments on image processing tasks show that the method is indeed viable to be used in practice.
\end{abstract}

\begin{IEEEkeywords}
Constrained learning, Per-sample constraints, Sequential penalty, Convergence analysis, Image watermarking.
\end{IEEEkeywords}

\section{Introduction}
As the computational and expressive power of deep learning models keeps growing, leading to surprising breakthroughs in science and technology at a sustained pace \cite{jumper2021highly,lam2023learning,katz2024gpt,merchant2023scaling}, interest in the use of these techniques in new scenarios and for very specific applications is also rising. The requirements in some of these setups are really tailored and often come in the form of precise specification for the outputs of the network. In mathematical terms, these requests would translate in the introduction of \textit{constraints} within the learning task. A prime example of this situation occurs with image processing applications, where the network is required to apply some transformation within input images to achieve a main goal (e.g., insertion of a watermark~\cite{zhu2018hidden} or creation of adversarial samples~\cite{xu2020adversarial}) while preserving visual perception quality.

To address this type of challenges, researchers often introduce some metric for measuring the quality of an output with respect to the requirement and then use it to define an additional loss function to be added to the main loss of the task at hand \cite{chakraborty2024improving,chen2016infogan,dunion2023conditional,higgins2017beta,kumar2017variational,zhu2017unpaired,magistri2024elastic}. In this way, behaviors of the network contrasting with the requirement are discouraged, and training can take into account the additional specification. This approach is particularly convenient for practitioners, as training can be performed as usual by SGD-type algorithms like Adam \cite{kingma2014adam}, efficiently exploiting standard automatic differentiation libraries.

However, as also thoroughly underlined in \cite{ramirez2025position}, the issue with the above strategy lies in the choice of the trade-off (hyper)parameter to be set within the overall loss, which is most often not intelligible by humans. The risk is therefore to select a value for the penalty term in the loss being either too low - resulting in the partial or even total neglect of the additional requirement by the resulting network - or too large - leading to the sacrifice of the performance with the main goal.  In other words, there is a likely risk of either ignoring the constraints or sacrificing performance to satisfy it with unnecessary margin. For a proper calibration of the learning process a careful validation  would thus be needed, with still possibly flawed results. 

We thus align with the viewpoint expressed and supported in detail in \cite{ramirez2025position}, a viewpoint actually noted some decades ago already \cite{platt1987constrained} and more recently affirmed by other researchers \cite{lavado2023achieving,fioretto2020lagrangian,dener2020training,nandwani2019primal}, and we argue that those requirements should actually be treated for what they essentially are: constraints of the learning optimization problem. In fact, the threshold value for the constraint can be intelligibly set by the user: with reference once again to the case of image processing, a human can straightforwardly identify the acceptability level for the perceptive distortion of the images. Over that threshold, output images should be rejected altogether; under that threshold, we should be fine and stop requiring further improvement of output visual quality. This type of path was for instance followed for imposing weights sparsity in the resulting network \cite{gallego2022controlled}, physics constraints \cite{dener2020training,hwang2021lagrangian}, regularization \cite{lavado2023achieving,yang2020enhancing}, class-balanced predictions \cite{sangalli2021constrained}, natural language semantic \cite{nandwani2019primal}, interclass concentration \cite{yang2025conditional}.

\IEEEpubidadjcol
We are thus interested in studying constrained learning problems and, in particular, tasks where a clear constraint shall be satisfied by model output (or byproducts) for each data point:
\begin{gather}
	\label{eq:cdlp}
	\min_{w}\;\mathcal{L}(w) = \sum_{i=1}^{N}\ell(w;x^i,y^i)\quad\text{s.t. }c(w;x^i)\le B\quad \forall\,i,
\end{gather}
where $\mathcal{L}$ represents the main loss function, dependent on the network tunable weights $w$, computed on a training set of $N$ samples, and $c$ is the constraint function, that shall take a value under the threshold $B$ for all samples in the dataset. Similarly to a loss function, the constraint $c$ is a function of the weights of the network and provides some metric related to the output of the network given an input vector. This scenario is for instance covered in works like \cite{dener2020training,sangalli2021constrained,gnecco2014learning}.  We will not treat constraints that directly affect model structure - imposed, e.g., for regularization, model compression or physical consistency aims, such as \cite{hwang2021lagrangian,gallego2022controlled}.

We shall underline that, of course, constraints are set on training data: we will in any case have no guarantee that network outputs will also satisfy them for out-of-sample data. Yet, this issue is intrinsic with learning problems and would be equally troublesome if we solved, as often done in practice, the ``penalized'' problem 
\begin{equation*}
	\begin{aligned}
		\min_{w}\;&\mathcal{L}(w) +\lambda \sum_{i=1}^Nc(w;x^i).
	\end{aligned}    
\end{equation*}

The focus of this work will be posed on the design of a suitable algorithmic framework for solving the specific class of problems \eqref{eq:cdlp} with the explicit management of the constraints. The optimization method we present within this work is a sequential penalty approach that makes variables updates via stochastic-gradient type steps. Sequential approaches, like penalty and augmented Lagrangian methods (ALMs) \cite[Ch.\ 21]{grippo2023introduction}\cite{birgin2014practical}  represent consolidated ways of tackling optimization problems with nonlinear constraints in fully deterministic scenarios. In recent years, settings have also been considered taking into account stochasticity, noise or finite-sum structure in the objective function \cite{zuo2025adaptive,lavado2023achieving,krejic2025aspen,wang2017penalty} and possibly also in the constraints \cite{li2024stochastic}. In the latter case, noisy access to constraints can correspond to the subsampling of a finite-sum type of constraints, where all subfunctions (i.e., data points) simultaneously contribute to the constraint value and the approximation does not allow to grasp exact information about the possible current violation. We need to point out that if we employ mini-batch sampling methods on problem \eqref{eq:cdlp} we get something inherently different: we in fact sample the set of constraints, getting the exact value for the constraints associated with selected data points. 
To tackle the specific setting of \eqref{eq:cdlp}, ALM-type approaches have been proposed in \cite{dener2020training,sangalli2021constrained}, but convergence and correctness aspects in the mini-batch optimization scenarios were not rigorously addressed.

For the sequential penalty algorithm presented in this work, introduced in Section \ref{sec:seq_penalty} after a preliminary discussion in Section \ref{sec:prelims}, we prove correctness and asymptotic convergence properties (Section \ref{sec:conv}) and we show the results of computational experiments carried out on a simple preliminary test problem (Section \ref{sec:expA}); we then present in Section \ref{sec:expB} the results of the application of the proposed methodology on a real task related to the watermarking of medical images.

\section{Problem Statement}
\label{sec:prelims}
With the broadest possible perspective, the class of optimization problems we address in this paper is that of the form  
\begin{equation}
	\label{eq:gen_prob}
		\min_{x\in\mathbb{R}^n}\;f(x)\qquad\text{s.t. }g_i(x)\le0,\;i= 1,\ldots,m,
\end{equation}
where $f:\mathbb{R}^n\to \mathbb{R}$, $g_i:\mathbb{R}^n\to \mathbb{R}$, $i=1,\ldots,m$, are $L_f$-smooth and $L_{g_i}$-smooth functions respectively. We recall that a function $\varphi$ is $L$-smooth if it is continuously differentiable and the gradient $\nabla\varphi$ is Lipschitz-continuous with Lipschitz constant $L$. We also assume $f$ is lower bounded on $\mathbb{R}^n$ by some value $f^*$. We denote the feasible set by $S = \{x\in\mathbb{R}^n\mid g_i(x)\le 0,\;i=1,\ldots,m\}$.

For problems of this form, the well-known Karush-Khun-Tucker (KKT) conditions (see, e.g., \cite{bertsekas1997nonlinear}) can be stated according to the next definition.
\begin{definition}[Karush--Kuhn--Tucker (KKT) conditions]
	Suppose that $f,g_1,\dots,g_m$ are continuously differentiable functions.
	A point $x^\ast$ satisfies the \emph{KKT conditions} if there exist multipliers $\lambda^\ast = (\lambda_1^\ast,\dots,\lambda_m^\ast)\in\mathbb{R}^m$ such that
	$$\nabla f(x^\ast) + \sum_{i=1}^m \lambda_i^\ast \nabla g_i(x^\ast) = 0,$$
	and, for all $i=1,\ldots,m,$
	$g_i(x^\ast) \le 0,$ $
	\lambda_i^\ast \ge 0,$ and $
	\lambda_i^\ast g_i(x^\ast).$
\end{definition}
To turn KKTs into necessary optimality conditions, we need to also recall a  regularity condition for the feasible set \cite{bertsekas1997nonlinear}.


\begin{definition}[Linear Independence Constraint Qualification (LICQ)]
	Let $x\in S$ and let $I(x)$ the set of active constraints at $x$, i.e., $I(x) =\{i\mid g_i(x) = 0\}$. We say that the \emph{Linear Independence Constraint Qualification} (LICQ) for problem \eqref{eq:gen_prob} holds at $x$ if gradients $\nabla g_i(x)$, $i\in I(x)$, are linearly independent.
\end{definition}

The LICQ can in fact be extended so that the definition can cover also infeasible points of the problem.

\begin{definition}[Extended Linear Independence Constraint Qualification (E-LICQ)]
	Let $x\in \mathbb{R}^n$ and let $I_+(x)$ the set of active and violated constraints at $x$, i.e., $I_+(x) =\{i\mid g_i(x) \ge 0\}$. We say that the \emph{Extended Linear Independence Constraint Qualification} (LICQ) for problem \eqref{eq:gen_prob} holds at $x$ if gradients $\nabla g_i(x)$, $i\in I_+(x)$, are linearly independent.
\end{definition}

The above definition will be useful later in this work, when dealing with the convergence properites of the proposed algorithm. Of course, at a feasible point the E-LICQ collapses to the standard LICQ. We are now ready to state the necessary condition of optimality. 

\begin{theorem}
	If $x^\ast$ is a local minimum of problem \eqref{eq:gen_prob} and the 
	LICQ holds at $x^\ast$,  
	then $x^\ast$ satisfies the KKT conditions.
\end{theorem}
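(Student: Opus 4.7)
The plan is to prove the theorem via a quadratic penalty argument, which aligns naturally with the paper's methodological framework. First I would fix $\varepsilon>0$ so small that $x^\ast$ is the unique minimizer of $f$ on $S\cap \bar{B}(x^\ast,\varepsilon)$, and for each integer $k$ introduce the auxiliary problem
$$\min_{x\in \bar{B}(x^\ast,\varepsilon)} P_k(x) := f(x) + \frac{k}{2}\sum_{i=1}^m \max(0,g_i(x))^2 + \frac{1}{2}\|x-x^\ast\|^2.$$
By continuity of $P_k$ and compactness of the ball, each subproblem admits a minimizer $x_k$.

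Next I would show $x_k\to x^\ast$. The chain $P_k(x_k)\le P_k(x^\ast)=f(x^\ast)$, combined with the lower boundedness of $f$, forces $\sum_i \max(0,g_i(x_k))^2\to 0$, so every limit point of $\{x_k\}$ lies in $S\cap \bar{B}(x^\ast,\varepsilon)$. Passing to the limit in $f(x_k)+\tfrac{1}{2}\|x_k-x^\ast\|^2\le f(x^\ast)$, continuity of $f$ and the unique-minimizer property then force any such limit point to coincide with $x^\ast$, proving $x_k\to x^\ast$.

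Since $x_k\to x^\ast$, for $k$ large enough $x_k$ is an interior point of $\bar B(x^\ast,\varepsilon)$, so the unconstrained first-order condition $\nabla P_k(x_k)=0$ yields
$$\nabla f(x_k) + \sum_{i=1}^m \lambda_i^k\, \nabla g_i(x_k) + (x_k-x^\ast) = 0,\qquad \lambda_i^k := k\max(0,g_i(x_k))\ge 0.$$
Moreover, for $i\notin I(x^\ast)$ we have $g_i(x^\ast)<0$, hence by continuity $g_i(x_k)<0$ for $k$ large, so $\lambda_i^k=0$ eventually. Thus any limit multiplier would automatically satisfy complementary slackness.

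The main obstacle is establishing boundedness of the sequence $\{\lambda^k\}$, and this is exactly where LICQ is needed. I would argue by contradiction: if $\|\lambda^k\|\to\infty$ along a subsequence, dividing the first-order condition by $\|\lambda^k\|$ and passing to the limit along a further subsequence would yield a nonzero vector $\bar\lambda\ge 0$ supported on $I(x^\ast)$ such that $\sum_{i\in I(x^\ast)}\bar\lambda_i\,\nabla g_i(x^\ast)=0$, contradicting the linear independence of $\{\nabla g_i(x^\ast)\}_{i\in I(x^\ast)}$. Hence $\{\lambda^k\}$ is bounded, any cluster point $\lambda^\ast\ge 0$ inherits the support property just discussed, and passing to the limit in the first-order condition recovers exactly the KKT system. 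The boundedness step is where the proof is delicate; the remainder is continuity and compactness.
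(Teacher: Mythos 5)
Your argument is correct in substance: it is the classical quadratic-penalty proof of the KKT necessary conditions, essentially the one found in the reference the paper cites for this theorem (Bertsekas), and the paper itself states the result without proof precisely because it is standard. Your route is also closely aligned with the paper's own machinery: the normalization-and-contradiction argument you use to bound $\{\lambda^k\}$ under LICQ is the same device the authors deploy in the proof of their Theorem~3 on outer-loop convergence, so the proposal fits the paper's framework naturally. One small imprecision worth fixing: you cannot in general choose $\varepsilon$ so that $x^\ast$ is the \emph{unique} minimizer of $f$ on $S\cap\bar{B}(x^\ast,\varepsilon)$, since a local minimum need not be strict. Fortunately you do not need this: it suffices that $f(x^\ast)\le f(x)$ on $S\cap\bar{B}(x^\ast,\varepsilon)$, because the proximal term you already included does the rest. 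From $f(x_k)+\tfrac{1}{2}\|x_k-x^\ast\|^2\le f(x^\ast)$ and the fact that any limit point $\bar{x}$ lies in $S\cap\bar{B}(x^\ast,\varepsilon)$, local optimality gives $f(\bar{x})\ge f(x^\ast)$, hence $\|\bar{x}-x^\ast\|^2\le 0$ and $\bar{x}=x^\ast$. With that wording corrected, the remaining steps (interiority of $x_k$ for large $k$, the stationarity identity defining $\lambda_i^k=k\max(0,g_i(x_k))\ge 0$, eventual vanishing of $\lambda_i^k$ for inactive constraints, boundedness via LICQ, and passage to the limit) are all sound and complete the proof.
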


The constrained deep learning problem \eqref{eq:cdlp} is a particular instance of problem \eqref{eq:gen_prob}. In fact, if we assume to have $m$ constraint functions $g_i$ to enforce for each training data point $j$, we end up with
\begin{equation}
	\label{eq:constrained_dl2}
		\min_{x\in\mathbb{R}^n}\;f(x) = \sum_{j=1}^{N}f_j(x)\quad \text{s.t. } g_{ij}(x)\le0\;\forall\, i,\;\forall\,j,
\end{equation}
where here $x$ would denote the weights of the network. While for most aspects related to the analysis of both the problem and the algorithm the particular structure of \eqref{eq:constrained_dl2} does not need tailored adjustments and we could just focus on the general case \eqref{eq:gen_prob}, the sample-wise structure of both objective and constraints in the learning scenario will be central in the design of an actually employable method.

\section{A Sequential Penalty Approach with Inexact Stochastic Solver}
\label{sec:seq_penalty}
There is a vast and consolidated literature in the optimization field concerning algorithms to tackle problems of the form \eqref{eq:gen_prob} and, in particular, focusing on sequential approaches like penalty and augmented Lagrangian methods \cite{grippo2023introduction,birgin2014practical}. 
For simplicity, here we will only discuss the case of (quadratic) penalty approaches, that are based on the penalty function defined as follows. 
\begin{definition}
	The quadratic penalty function associated with problem \eqref{eq:gen_prob} is defined as
	$$P_\tau(x) = f(x) + \frac{\tau}{2} \sum_{i=1}^m \max\{0,g_i(x)\}^2.$$
\end{definition}

In essence, the  \textit{sequential penalty method} generates a sequence $\{x^k\}\subseteq \mathbb{R}^n$ such that each $x^k$
is a (approximate) solution to the subproblem
$$\min_{x\in\mathbb{R}^n}P_{\tau_k}(x),$$
for increasingly large values of $\tau_k$. The rationale of the approach is that of optimizing the objective function with a penalty for constraints violations; as the weight of the penalty in the subproblem objective grows, solutions will be progressively encouraged to strictly satisfy the constraints. 
Convergence results for this scheme depend, intuitively, on how the subproblems are solved. Fortunately, there is no need to exactly solve each subproblem to global optimality. In standard setups, by ``solving'' we usually mean that an approximate stationary point for $P_{\tau_k}$ is found, meaning that $\|\nabla P_{\tau_k}(x^k)\|\le \epsilon_k.$

For convergence, we will then ask $\tau_k\to \infty$ and $\epsilon_k\to 0$, i.e., we get progressively more accurate as iterations go by and we work with increasingly penalized objectives. Under these and some other standard assumptions (see \cite[Ch.\ 21]{grippo2023introduction}), the framework can be proven to enjoy the following property: if the produced sequence admits limit points, then all limit points are feasible for the original problem and satisfy KKT conditions.
While not explicitly required in theory, minimization of $P_{\tau_k}$ shall start from $x^{k-1}$ for computational reasons.

In the interesting case of problems of the form \eqref{eq:constrained_dl2}, we therefore see that the penalty function takes the form
$$P_\tau(x) = \sum_{j=1}^{N}f_j(x) + \frac{\tau}{2} \sum_{i=1}^m\sum_{j=1}^N \max\{0,g_{ij}(x)\}^2 = \sum_{i=j}^N P^j_\tau(x),$$
where $P^j_\tau(x) = f_j(x)+\frac{\tau}{2}\sum_{i=1}^m\max\{0,g_{ij}(x)\}^2$. The penalty function is therefore a finite-sum function.
Penalty subproblems can then be naturally handled and approximately solved by the usual SGD type methods employed for large-scale machine learning \cite{bottou2018optimization}.  Of course, to proceed in this direction we have to accept that approximate optimality results for subprolems will be only obtainable in expectation. In other words, we have to settle for a result of the type
\begin{equation}
	\label{eq:approx_sol_expect}
	\mathbb{E}[\|\nabla P_{\tau_k}(x^k)\|]\le \epsilon_k
\end{equation}
for all $k$. 
The main challenges addressed in this work thus regard two key questions:
\begin{itemize}
	\item Is it possible to devise a stopping condition for an SGD solver so that we can ensure  condition \eqref{eq:approx_sol_expect} will be practically attained in a finite number of steps for all $k$?
	\item Can we prove asymptotic convergence properties for the sequence $\{x^k\}$ if condition \eqref{eq:approx_sol_expect} is satisfied for all $k$? 
\end{itemize}
While we will focus on the specific case of problem \eqref{eq:constrained_dl2} with finite-sum type penalty functions in the analysis of the inner optimization loop, the analysis for the outer algorithm will cover the more general case of problem \eqref{eq:gen_prob} where the penalty subproblems are solved stochastically and property \eqref{eq:approx_sol_expect} is enforced by any technique.



\section{Convergence Analysis}
\label{sec:conv}
For the theoretical analysis of the proposed algorithmic framework, we need to recall some important concepts. First we introduce a standard assumption \cite{mishkin2020interpolation} regarding the stochastic gradients employed in SGD algorithms. In what follows, $\mathbb{E}_i$ denotes the expected value w.r.t.\ the random variable $i$, denoting the randomly sampled term of the finite sum (i.e., the data point). We assume that  sampling is conducted in such a way that $\mathbb{E}_i[\nabla \phi_j(x)] = \nabla \phi(x)$, i.e.,  sampled gradient is an unbiased estimate of full gradient $\nabla \phi(x)$.
\begin{definition}[\cite{schmidt2013fast} ]
	A finite-sum function $\phi(x) = \sum_{j=1}^{N}\phi_j(x)$  satisfies the \textit{Strong Growth Condition} (SGC) if there exists $\rho>0$ such that, for any point $x\in\mathbb{R}^n$, $\mathbb{E}_i[||\nabla \phi_i(x)||^2] \leq \rho ||\nabla \phi(x)||^2.$
\end{definition}

We then have to recall a series of concepts and standard results from probability theory (see, e.g., \cite{durrett2019probability} for reference) that will be needed to characterize and analyze the behavior of our stochastic procedure. We start with a classical concept of convergence in a non-deterministic scenario.
\begin{definition}[Convergence in probability]
	Let $\{Y_k\}$ be an infinite sequence of random variables. 
	We say that $Y_k$ \emph{converges in probability} to $X$, written
	$
	Y_k \xrightarrow{P} X,
	$
	if for every $\varepsilon > 0$ we have
	$
	\lim_{k \to \infty} \mathbb{P}(|Y_k - X| > \varepsilon) = 0.
	$
\end{definition}
Another standard convergence concept, strictly stronger than convergence in probability, is almost sure convergence.
\begin{definition}[Convergence almost surely]
	Let $\{Y_k\}$ be an infinite sequence of random variables. 
	We say that $Y_k$ \emph{converges almost surely} to $X$, written
	$
	Y_k \xrightarrow{\text{a.s.}} X,
	$
	if
	$
	\mathbb{P}\bigl( \lim_{k \to \infty} Y_k = X  \bigr) = 1.
	$
\end{definition}
As aforementioned, almost sure convergence implies convergence in probability. In general, the converse is not necessarily true. However, the following result can be stated.

\begin{lemma}[{\cite[Th.\ 2.3.2]{durrett2019probability}} ]
	\label{lemma:durret}
	A sequence $\{Y_k\}$ of random variables converges to $X$ in probability iff for every subsequence $\{Y_k\}_K$, $K\subseteq\{0,1,\ldots\}$, there exists a further subsequence $\{Y_k\}_{K_1}$, $K_1\subseteq K$, that converges to $X$ almost surely.
\end{lemma}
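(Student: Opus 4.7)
The plan is to prove the two implications separately, since the statement is an equivalence.

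For the forward direction (convergence in probability implies the subsequence property), I would start from any subsequence $\{Y_k\}_K$ and note that, since $Y_k \xrightarrow{P} X$ along the full index set, the same holds along $K$. The idea is then to construct by induction an increasing sequence of indices $k_1 < k_2 < \cdots$ in $K$ such that $\mathbb{P}(|Y_{k_n} - X| > 1/n) < 2^{-n}$; the existence of each $k_n$ follows immediately by applying the definition of convergence in probability along $K$ with $\varepsilon = 1/n$. Setting $K_1 = \{k_1, k_2, \ldots\}$, I would invoke the first Borel--Cantelli lemma: since $\sum_{n} \mathbb{P}(|Y_{k_n}-X| > 1/n) \le \sum_{n} 2^{-n} < \infty$, the event $\{|Y_{k_n}-X|>1/n\}$ occurs only finitely often almost surely, and hence $Y_{k_n} \to X$ almost surely on a set of full probability.

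For the reverse direction, I would proceed by contradiction. Assume the subsequence property holds but $Y_k \not\xrightarrow{P} X$; then, negating the definition, there exist $\varepsilon_0, \delta_0 > 0$ and an index set $K \subseteq \{0,1,\ldots\}$ with $\mathbb{P}(|Y_k - X| > \varepsilon_0) \ge \delta_0$ for all $k \in K$. By hypothesis, this subsequence admits a further subsequence $\{Y_k\}_{K_1}$, $K_1\subseteq K$, with $Y_k \xrightarrow{\text{a.s.}} X$ along $K_1$. Since almost sure convergence implies convergence in probability (as noted in the excerpt immediately before the statement), we would have $\mathbb{P}(|Y_k - X| > \varepsilon_0) \to 0$ along $K_1$, contradicting the lower bound $\delta_0$ which persists along all of $K \supseteq K_1$.

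The main, and really the only non-routine, ingredient is the Borel--Cantelli step in the forward implication: the key insight is that one has freedom to pick indices in $K$ making the tail probabilities decrease as fast as desired, and arranging summability is exactly what upgrades ``in probability'' to ``almost sure''. The reverse implication is by comparison a soft contrapositive argument that only exploits the already-known fact that a.s.\ convergence is strictly stronger than convergence in probability.
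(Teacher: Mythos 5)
Your proof is correct. The paper does not prove this lemma---it is stated as an imported result with a citation to Durrett, Th.~2.3.2---and your argument (extracting indices in $K$ with summable tail probabilities and applying Borel--Cantelli for the forward direction, and a contrapositive argument using the fact that almost sure convergence implies convergence in probability for the reverse) is precisely the standard proof given in that reference, with no gaps.
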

In other words, while convergence in probability does not generally imply almost sure convergence, it does at least implies convergence of some subsequences. We finally conclude the preliminary discussion with a standard inequality.
\begin{lemma}[Markov's inequality]
	\label{lemma:markov}
	Let $X$ be a non-negative random variable, and let $a > 0$.  
	Then
	$
	\mathbb{P}(X \ge a) \le \frac{\mathbb{E}[X]}{a}.
	$
\end{lemma}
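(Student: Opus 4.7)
The plan is to proceed by the standard indicator-function argument, which turns the probability on the left-hand side into the expectation of a simple bounding random variable that can be compared pointwise with $X$. This approach is essentially the only natural one at this level of generality, since we are not given any further structural information about the distribution of $X$ beyond non-negativity and the existence of the expectation.

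Concretely, I would first introduce the indicator random variable $\mathbf{1}_{\{X\ge a\}}$, which takes value $1$ on the event $\{X\ge a\}$ and $0$ elsewhere, and observe the key pointwise inequality
$$a\,\mathbf{1}_{\{X\ge a\}} \le X,$$
which is to be verified by a two-case split: on $\{X\ge a\}$ the left-hand side equals $a$ while $X\ge a$; on the complement $\{X<a\}$ the left-hand side is $0$ while $X\ge 0$ by hypothesis. Both cases rely crucially on $X\ge 0$ (to cover the complementary event) and on $a>0$ (so that dividing through will be legitimate in the last step).

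Next, I would take expectations of both sides, using monotonicity of the Lebesgue integral together with the defining identity $\mathbb{E}[\mathbf{1}_A]=\mathbb{P}(A)$ for any measurable event $A$. This yields
$$a\,\mathbb{P}(X\ge a)\le \mathbb{E}[X].$$
Dividing by $a>0$ produces exactly the claimed bound.

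There is no serious obstacle here: the only subtle point is the pointwise comparison $a\,\mathbf{1}_{\{X\ge a\}} \le X$, which must be justified carefully on both events (and where the non-negativity of $X$ is essential). The rest is a routine application of linearity/monotonicity of expectation and the correspondence between indicator expectations and probabilities.
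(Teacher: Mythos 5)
Your proof is correct and is the standard indicator-function argument for Markov's inequality; the paper itself states this lemma as a recalled classical fact without providing a proof, so there is nothing to diverge from. The two-case verification of the pointwise bound $a\,\mathbf{1}_{\{X\ge a\}}\le X$ and the subsequent use of monotonicity of expectation are exactly right.
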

We can now turn to the analysis of the algorithm.

\subsection{Finite termination of the inner solver}
In this section we analyze the convergence of SGD on penalty subproblems of the form 
\begin{equation}
	\label{eq:subprob-fs}
	\min_x\;P_{\tau_k}(x) = \sum_{j=1}^{N}P_{\tau_k}^j(x).
\end{equation}
For this analysis, we are first going to understand the regularity properties of function $P_{\tau_k}(x)$. 

First, we shall note that a general $P_{\tau}(x)$, associated with any problem of the form \eqref{eq:gen_prob}, is $L$-smooth in a compact set.
\begin{lemma} \label{lemma:l_smooth}
	Let $C\subseteq\mathbb{R}^n$ be a convex compact set. The penalty function $P_{\tau}$ associated with problem \eqref{eq:gen_prob} is $L_{\tau,C}$-smooth with $L_{\tau,C}=L_f + \tau\big(\sum_{i=1}^mM_{i1}^2 + M_{i2} L_g\big)$, where
	$$
	M_{i1} := \sup_{x\in C}\|\nabla g_i(x)\|,\qquad M_{i2} := \sup_{x\in C}\max(0,g_i(x)).
	$$
\end{lemma}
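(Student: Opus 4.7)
The plan is to show that $\nabla P_\tau$ exists and is Lipschitz on $C$ by direct manipulation, relying on the fact that each summand $\tfrac{1}{2}\max\{0,g_i(x)\}^2$ is continuously differentiable (the squaring smooths the kink of the positive part) with gradient $\max\{0,g_i(x)\}\nabla g_i(x)$. Hence we can write
\begin{equation*}
\nabla P_\tau(x) = \nabla f(x) + \tau \sum_{i=1}^{m} \max\{0,g_i(x)\}\,\nabla g_i(x),
\end{equation*}
and the problem reduces to bounding $\|\nabla P_\tau(x)-\nabla P_\tau(y)\|$ for $x,y\in C$ in terms of $\|x-y\|$.

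For the first term, $L_f$-smoothness of $f$ gives the contribution $L_f\|x-y\|$. For each $i$, I would add and subtract $\max\{0,g_i(x)\}\nabla g_i(y)$ to split the $i$-th term into two pieces,
\begin{equation*}
\max\{0,g_i(x)\}\bigl[\nabla g_i(x)-\nabla g_i(y)\bigr] + \bigl[\max\{0,g_i(x)\}-\max\{0,g_i(y)\}\bigr]\nabla g_i(y),
\end{equation*}
and bound each piece separately. The first piece is at most $M_{i2} L_{g_i}\|x-y\|$, using the definition of $M_{i2}$ and the $L_{g_i}$-smoothness of $g_i$. For the second piece, I would use that $t\mapsto\max\{0,t\}$ is 1-Lipschitz together with the mean value inequality on the convex set $C$, which gives $|g_i(x)-g_i(y)|\le M_{i1}\|x-y\|$ (since $\|\nabla g_i\|\le M_{i1}$ on $C$), and the further bound $\|\nabla g_i(y)\|\le M_{i1}$; this piece is therefore at most $M_{i1}^2\|x-y\|$.

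Summing the two pieces over $i$ and multiplying by $\tau$ yields the claimed constant $L_{\tau,C}=L_f+\tau\sum_{i=1}^m(M_{i1}^2+M_{i2}L_{g_i})$. I would open the proof with a brief remark that $M_{i1}$ and $M_{i2}$ are finite, since $\nabla g_i$ and $g_i$ are continuous and $C$ is compact, so that all the bounds above are well-posed. The only genuinely delicate point is the handling of $\max\{0,\cdot\}$: the squaring is what makes $\nabla P_\tau$ exist everywhere, and the add-and-subtract trick is what allows converting the product $\max\{0,g_i\}\nabla g_i$ into a sum of terms each involving either a bounded factor times a Lipschitz factor, which is where both $M_{i1}^2$ and $M_{i2}L_{g_i}$ arise. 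No further machinery is needed.
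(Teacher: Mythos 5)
Your proposal is correct and follows essentially the same argument as the paper: the same gradient formula, the same add-and-subtract splitting of the product $\max\{0,g_i\}\nabla g_i$ (you merely insert the cross term $\max\{0,g_i(x)\}\nabla g_i(y)$ where the paper uses $\max\{0,g_i(y)\}\nabla g_i(x)$, a symmetric and equivalent choice), the 1-Lipschitzness of the positive part, the mean value theorem on the convex compact set $C$, and the $L_{g_i}$-smoothness of each $g_i$, arriving at the identical constant. Your added remark that $M_{i1}$ and $M_{i2}$ are finite by continuity and compactness is a small, welcome touch of rigor that the paper leaves implicit.
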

\begin{proof}
	 $P_\tau$ is a differentiable function with $$\nabla P_{\tau}(x) = \nabla f(x) + \tau\sum_{i=1}^{m}\max\{0,g_i(x)\}\nabla g_i(x),$$
	which can be easily shown to be continuous.
	
	Now, for the ease of notation let $g_i^+(x) = \max\{0,g_i(x)\}$. Let $x,y\in C$. We have
	{\small \begin{align*}
		\|&\nabla P_\tau(x)-\nabla P_\tau(y)\|
		\\&\le \|\nabla f(x)-\nabla f(y)\|
		 + \tau\|\sum_{i=1}^m g_i^+(x)\nabla g_i(x)-g_i^+(y)\nabla g_i(y)\|\\&\le \|\nabla f(x)-\nabla f(y)\|
		+ \tau\sum_{i=1}^m \|g_i^+(x)\nabla g_i(x)-g_i^+(y)\nabla g_i(y)\|.
	\end{align*}}
	We can now rearrange the terms in the sums, adding and subtracting $\max\{0,g_i(y)\}\nabla g_i(x)$, to get
	{\small \begin{align*}
		g_i^+(x)\nabla& g_i(x)-g_i^+(y)\nabla g_i(y)
		\\&= (g_i^+(x)-g_i^+(y))\nabla g_i(x)  + g_i^+(y)(\nabla g_i(x)-\nabla g_i(y)).
	\end{align*}}

	Taking norms, using triangle inequality, and recalling the definitions of $M_{i1}$ and $M_{i2}$, we get
	{\small \begin{align*}
		\|g_i^+(x)&\nabla g_i(x)-g_i^+(y)\nabla g_i(y)\|
		\\&\le  |g_i^+(x)-g_i^+(y)|\|\nabla g_i(x)\| + g_i^+(y)\|\nabla g_i(x)-\nabla g_i(y)\|\\&\le |g_i^+(x)-g_i^+(y)|M_{i1} +\|\nabla g_i(x)-\nabla g_i(y)\|M_{i2}.
	\end{align*}}
	From the properties of the $\max$ function, we have that $|g_i^+(x)-g_i^+(y)|\le |g_i(x)-g_i(y)|$. Then, from the mean value theorem it holds that $ |g_i(x)-g_i(y)|= \|\nabla g_i(z)\| \|x-y\| \le M_{i1}\|x-y\|$, where the second equality follows since $z$ lies in the line segment connecting $x$ and $y$, and therefore $z \in C$. Recalling that $g_i$ is $L_{g_i}$-smooth, we can continue writing
	{\small \begin{align*}
		\|g_i^+(x)\nabla g_i(x)&-g_i^+(y)\nabla g_i(y)\|
		\\&\le |g_i^+(x)-g_i^+(y)|M_{i1} +\|\nabla g_i(x)-\nabla g_i(y)\|M_{i2}\\&\le M_{i1}^2\|x-y\|+M_{i2}L_{g_i}\|x-y\|.
	\end{align*}}
	
	\noindent Putting everything back together, recalling that $f$ is $L_f$-smooth, we get
	$$\|\nabla P_\tau(x)-\nabla P_\tau(y)\|\le (L_f+\tau(\sum_{i=1}^{m}M_{i1}^2+M_{i2}L_{g_i}))\|x-y\|,$$
	which completes the proof.
	
\end{proof}

We then turn to the specific case of problems \eqref{eq:constrained_dl2} and state an additional assumption.

\begin{assumption} \label{assumption:SGC}
	For any $\tau>0$, the penalty function $P_{\tau}$ associated with problem \eqref{eq:constrained_dl2} satisfies the SGC property with an SGC constant $\rho_\tau$.
\end{assumption}

\noindent Given the above properties, we can state the next result.
\begin{theorem}
	\label{th:finite_term}
	Let $C\subset\mathbb{R}^n$ be a convex compact set. Let $\{z^t\}$ be the sequence produced by SGD, with a constant stepsize $\eta = \frac{1}{\rho_{\tau_k}L_{\tau_k,C}}$,  applied to problem \eqref{eq:subprob-fs}. Assume that  $\{z^t\}\subseteq C$ and that, at each iteration $t$, the algorithm outputs a solution $\hat{x}^t$ uniformly drawn from $\{z^0,\ldots,z^{t-1}\}$, i.e., $\hat{x}^t\sim\mathcal{U}[z^0,\ldots,z^{t-1}]$. Then, for any $\epsilon_k>0$, we have $$\mathbb{E}[\|\nabla P_{\tau_k}(\hat{x}^t)\|]\le \epsilon_k$$ for all $t\ge T_k$, with $T_k=\frac{2\rho_{\tau_k} L_{\tau_k,C} (P_{\tau_k}(z^0)-P_{\tau_k}^*)}{\epsilon_k^2}$.
\end{theorem}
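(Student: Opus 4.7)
The plan is to follow the standard template for analyzing SGD on smooth non-convex functions under the Strong Growth Condition, adapted to the penalty subproblem and combined with the uniform-sampling output trick that turns a sum bound into an expectation bound.

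First, I would invoke Lemma \ref{lemma:l_smooth} to conclude that $P_{\tau_k}$ is $L_{\tau_k,C}$-smooth on the convex compact set $C$; together with the hypothesis $\{z^t\}\subseteq C$, this lets me apply the descent-lemma inequality along any pair of consecutive iterates of SGD. Combined with the SGD update $z^{t+1}=z^t-\eta\nabla P^{i_t}_{\tau_k}(z^t)$ and taking conditional expectation with respect to the sample index $i_t$, using unbiasedness of the stochastic gradient, I obtain
\[
\mathbb{E}_{i_t}[P_{\tau_k}(z^{t+1})\mid z^t] \le P_{\tau_k}(z^t) - \eta\|\nabla P_{\tau_k}(z^t)\|^2 + \frac{L_{\tau_k,C}\eta^2}{2}\,\mathbb{E}_{i_t}\!\left[\|\nabla P^{i_t}_{\tau_k}(z^t)\|^2\right].
\]

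Next I would apply Assumption \ref{assumption:SGC} to bound the stochastic second moment by $\rho_{\tau_k}\|\nabla P_{\tau_k}(z^t)\|^2$, and then plug in the prescribed stepsize $\eta=1/(\rho_{\tau_k}L_{\tau_k,C})$. The coefficient in front of $\|\nabla P_{\tau_k}(z^t)\|^2$ becomes exactly $\eta(1-\rho_{\tau_k}L_{\tau_k,C}\eta/2)=1/(2\rho_{\tau_k}L_{\tau_k,C})$, yielding a clean one-step descent. Taking total expectation and telescoping from $t=0$ to $T-1$, and lower-bounding $\mathbb{E}[P_{\tau_k}(z^T)]\ge P_{\tau_k}^\ast$, I get
\[
\sum_{t=0}^{T-1}\mathbb{E}\!\left[\|\nabla P_{\tau_k}(z^t)\|^2\right] \le 2\rho_{\tau_k}L_{\tau_k,C}\bigl(P_{\tau_k}(z^0)-P_{\tau_k}^\ast\bigr).
\]

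Finally, I would exploit the uniform random output rule: because $\hat{x}^t\sim\mathcal{U}[z^0,\ldots,z^{t-1}]$, the tower property gives $\mathbb{E}[\|\nabla P_{\tau_k}(\hat{x}^t)\|^2]=\frac{1}{t}\sum_{s=0}^{t-1}\mathbb{E}[\|\nabla P_{\tau_k}(z^s)\|^2]$, which is bounded by $2\rho_{\tau_k}L_{\tau_k,C}(P_{\tau_k}(z^0)-P_{\tau_k}^\ast)/t$. Jensen's inequality applied to the concave square root then produces the desired bound on $\mathbb{E}[\|\nabla P_{\tau_k}(\hat{x}^t)\|]$, and setting this quantity $\le\epsilon_k$ solves for the threshold $T_k$ stated in the theorem.

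The step I expect to be the most delicate, rather than technically hard, is the bookkeeping around the assumption $\{z^t\}\subseteq C$: this is what keeps the smoothness constant $L_{\tau_k,C}$ from Lemma \ref{lemma:l_smooth} globally valid along the trajectory, so that the descent-lemma argument applies at every iteration. Everything else is routine algebraic manipulation of the SGD recursion and a single application of Jensen's inequality to transfer the bound from $\mathbb{E}[\|\cdot\|^2]$ to $\mathbb{E}[\|\cdot\|]$.
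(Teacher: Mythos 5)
Your proposal is correct and follows essentially the same route as the paper's proof: the descent lemma on $C$ via Lemma \ref{lemma:l_smooth}, the SGC bound on the stochastic second moment, the telescoping sum with the lower bound $P_{\tau_k}^\ast$, the uniform-output identification of the averaged sum as $\mathbb{E}[\|\nabla P_{\tau_k}(\hat{x}^T)\|^2]$, and a final application of Jensen's inequality. The only detail the paper spells out that you leave implicit is that $P_{\tau_k}^\ast$ is finite, which follows from $P_{\tau_k}(z)\ge f(z)\ge f^\ast$.
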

\begin{proof}

	Note that, since $f(z)\ge f^*$ for all $z\in\mathbb{R}^n$ and that $P_{\tau_k}(z)\ge f(z)$ for all $z$ by the definition of $P_{\tau_k}$, we have that the finite value $f^*$ represents a lower bound for $P_{\tau_k}$. So, $P_{\tau_k}^* = \inf_{x\in\mathbb{R}^n}P_{\tau_k}(x)$ is finite. 
	
	{
		Now, by Lemma \ref{lemma:l_smooth}, it holds that $P_{\tau_k}$ is $L_{\tau_k,C}$-smooth on $C$, and since $\{z^t\}\subseteq C$ we can therefore apply the descent lemma \cite[Prop.\ 11.3]{grippo2023introduction} and write for all $t$
        {\small
		\begin{align*}
			P_{\tau_k}& (z^{t+1})\\& \leq P_{\tau_k}(z^t) + \nabla   P_{\tau_k}(z^t)^\top (z^{t+1} - z^t) +  \frac{L_{\tau_k,C}}{2} \|z^{t+1} - z^t\|^2 \\
			& = P_{\tau_k}(z^t) - \eta \nabla P_{\tau_k}(z^t)^\top \nabla P_{\tau_k}^{i_t}(z^t) +  \frac{\eta^2 L_{\tau_k,C}}{2} \|\nabla P_{\tau_k}^{i_t}(z^t)\|^2
		\end{align*}}
		and, rearranging,
        \begin{gather*}
            \frac{P_{\tau_k}(z^{t+1}) - P_{\tau_k}(z^t)}{\eta}  \leq - \nabla P_{\tau_k}(z^t)^\top \nabla P_{\tau_k}^{i_t}(z^t) +  \frac{\eta L_{\tau_k,C}}{2} \|\nabla P_{\tau_k}^{i_t}(z^t)\|^2.
        \end{gather*}
		Taking the expectation conditioned to $z^t$, from the assumption of unbiased gradient estimates, it holds
        \begin{gather*}
            \mathbb{E}_{i_t} \left[\frac{P_{\tau_k}(z^{t+1}) - P_{\tau_k}(z^t)}{\eta} \right] \leq - \| \nabla P_{\tau_k}(z^t) \|^2 + \frac{\eta L_{\tau_k,C}}{2} \mathbb{E}_{i_t} \left[ \|\nabla P_{\tau_k}^{i_t}(z^t)\|^2 \right];
        \end{gather*}
		then, from Assumption \ref{assumption:SGC}, we get
		\begin{gather*}
		    \mathbb{E}_{i_t} \left[\frac{P_{\tau_k}(z^{t+1}) - P_{\tau_k}(z^t)}{\eta} \right] \leq - \| \nabla P_{\tau_k}(z^t) \|^2 + \frac{\rho_{\tau_k} \eta L_{\tau_k,C}}{2}  \|\nabla P_{\tau_k}(z^t)\|^2 .
		\end{gather*}
		Rearranging the terms, we get
        \begin{gather*}
            \left(1 - \frac{\rho_{\tau_k} \eta L_{\tau_k,C}}{2} \right) \| \nabla P_{\tau_k}(z^t) \|^2 \leq \mathbb{E}_{i_t} \left[\frac{P_{\tau_k}(z^t) - P_{\tau_k}(z^{t+1})}{\eta} \right],
        \end{gather*}
		or equivalently, from the definition of $\eta$, 
		$$
		\frac{1}{2} \| \nabla P_{\tau_k}(z^t) \|^2 \leq {\rho_{\tau_k}L_{\tau_k,C}} \mathbb{E}_{i_t} \left[P_{\tau_k}(z^t) - P_{\tau_k}(z^{t+1})\right].
		$$
		Taking the total expectation we obtain
		$$\mathbb{E} \left[ \| \nabla P_{\tau_k}(z^t) \|^2 \right]\leq {2 \rho_{\tau_k}L_{\tau_k,C}} \mathbb{E} \left[P_{\tau_k}(z^t) - P_{\tau_k}(z^{t+1})\right].$$
		Then, summing over $T$ iterations we get
        {\small
		\begin{align*}
			\sum_{t=0}^{T-1} \mathbb{E} \left[ \| \nabla P_{\tau_k}(z^t) \|^2 \right] & \leq {2 \rho_{\tau_k}L_{\tau_k,C}}  \sum_{t=0}^{T-1}  \mathbb{E} \left[P_{\tau_k}(z^t) - P_{\tau_k}(z^{t+1})\right] \\
			& = {2 \rho_{\tau_k}L_{\tau_k,C}} \mathbb{E} \left[P_{\tau_k}(z^0) - P_{\tau_k}(z^{T})\right],
		\end{align*}}
		from which, using that $ P_{\tau_k}(z^{T}) \geq  P_{\tau_k}^*$, we get
	}
	$$\sum_{t=0}^{T-1}\mathbb{E}[\|\nabla P_{\tau_k}(z^t)\|^2]\le 2\rho_{\tau_k} L_{\tau_k,C} (P_{\tau_k}(z^0)-P_{\tau_k}^*).$$
	Dividing both sides by $T$ we then get
	$$\frac{1}{T}\sum_{t=0}^{T-1}\mathbb{E}[\|\nabla P_{\tau_k}(z^t)\|^2]\le \frac{1}{T} 2\rho_{\tau_k} L_{\tau_k,C} (P_{\tau_k}(z^0)-P_{\tau_k}^*).$$
	
	Now, since $\hat{x}^T$ is uniformly sampled from $\{z^0,\ldots,z^{T-1}\}$, we have that the leftmost expression in the above inequality represents the expected value of $\|\nabla P_{\tau_k}(\hat{x}^T)\|^2$. We can then write:
    \begin{align*}
        \mathbb{E}[\mathbb{E}[\|\nabla P_{\tau_k}(\hat{x}^T) \|^2]] &= \mathbb{E}[\|\nabla P_{\tau_k}(\hat{x}^T) \|^2] \\&\le \frac{2\rho_{\tau_k} L_{\tau_k,C} (P_{\tau_k}(z^0)-P_{\tau_k}^*)}{T}. 
    \end{align*}
	Also, by Jensen's inequality we can write
	$$\mathbb{E}[\|\nabla P_{\tau_k}(\hat{x}^T) \|^2]\ge \mathbb{E}[\|\nabla P_{\tau_k}(\hat{x}^T) \|]^2.$$
	We finally get  $\mathbb{E}[\|\nabla P_{\tau_k}(\hat{x}^T) \|]\le \epsilon_k$
	if
	$$\sqrt{\frac{2\rho_{\tau_k} L_{\tau_k,C} (P_{\tau_k}(z^0)-P_{\tau_k}^*)}{T}}\le \epsilon_k,$$
	i.e.,
	$T\ge \frac{2\rho_{\tau_k} L_{\tau_k,C} (P_{\tau_k}(z^0)-P_{\tau_k}^*)}{\epsilon_k^2}.$
\end{proof}

The result from Theorem \ref{th:finite_term} guarantees us that if SGD is run long enough on each subproblem, we eventually get a solution $x^k$ that provably satisfies \eqref{eq:approx_sol_expect}. We also get an estimate of the number of iterations required to make the approximate stationarity property hold, which could thus be used for a practical stopping condition. However, this condition is quite impractical - also taking into account that some of the constants defining $T_k$ will in general be not known. Still, it is valuable from the theoretical perspective. 
An interesting insight, on the other hand, is that at each outer iteration, i.e., for larger values of $\tau$ and smaller values for $\epsilon$, we would be in principle asked to run SGD longer on the subproblem.

\subsection{Outer Loop Convergence }
We can now turn to the convergence analysis for the outer loop. As anticipated, the results here are valid for any problem of the form \eqref{eq:gen_prob}, provided we have access to an inner solver that provably achieves condition \eqref{eq:approx_sol_expect} in finite time for each $k$.

The convergence result is reported in the following theorem.

\begin{theorem}
	\label{thm:stochastic_penalty}
	Consider problem \eqref{eq:gen_prob} and let $P_\tau$ be the associated penalty function. Assume $C\subseteq\mathbb{R}^n$ is a compact set and $\{x^k\}\subseteq C$ is such that $$\mathbb{E}\!\left[\|\nabla P_{\tau_k}(x^k)\|\right] \le \epsilon_k$$ for two sequences $\{\tau_k\}$ and $\{\epsilon_k\}$ such that $\tau_k\to\infty$ and $\epsilon_k\to 0$. Then $\|\nabla P_{\tau_{k}}(x^{k})\| \xrightarrow{P}0$ and there exists a subsequence of indices $\{k_j\}$ such that $\|\nabla P_{\tau_{k_j}}(x^{k_j})\| \xrightarrow{\text{a.s.}} 0$. Moreover, almost surely there exists a limit point $\bar{x}$ of $\{x^k\}$ such that, if it satisfies the E-LICQ, then it is a feasible solution for \eqref{eq:gen_prob}, i.e., $\bar{x}\in S$, and it is a KKT point for the original problem.
\end{theorem}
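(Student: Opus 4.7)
The plan is to combine a short probabilistic argument with the classical deterministic limit analysis of the quadratic penalty method, using the stochastic-convergence toolbox of Section~\ref{sec:conv} to reduce everything to a pathwise setting on which the usual penalty-method reasoning applies. First I would apply Markov's inequality (Lemma~\ref{lemma:markov}) to the nonnegative random variable $\|\nabla P_{\tau_k}(x^k)\|$, obtaining $\mathbb{P}(\|\nabla P_{\tau_k}(x^k)\|\ge \varepsilon)\le \epsilon_k/\varepsilon \to 0$ for every $\varepsilon>0$, which immediately gives $\|\nabla P_{\tau_k}(x^k)\| \xrightarrow{P} 0$. Lemma~\ref{lemma:durret} then extracts a subsequence $\{k_j\}$ along which the convergence is almost sure. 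Fixing a realization in this probability-one event, $\{x^{k_j}\}$ is a deterministic sequence in the compact set $C$, so Bolzano--Weierstrass extracts a further subsequence (still denoted $\{k_j\}$) with $x^{k_j}\to \bar x \in C$ and $\|\nabla P_{\tau_{k_j}}(x^{k_j})\|\to 0$. From this point the argument is pathwise.

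To prove feasibility, I would divide the identity
$$\nabla P_{\tau_k}(x^k) = \nabla f(x^k) + \tau_k \sum_{i=1}^m \max\{0,g_i(x^k)\}\nabla g_i(x^k)$$
by $\tau_{k_j}$ and pass to the limit: the left-hand side vanishes because $\|\nabla P_{\tau_{k_j}}(x^{k_j})\|\to 0$ and $\tau_{k_j}\to\infty$, while the $\nabla f/\tau_{k_j}$ term vanishes by continuity of $\nabla f$ on $C$. Hence $\sum_{i=1}^m \max\{0,g_i(\bar x)\}\nabla g_i(\bar x)=0$. The nonzero coefficients in this sum occur exactly for $g_i(\bar x)>0$, i.e., for $i \in I_+(\bar x)$; the E-LICQ makes the corresponding gradients linearly independent, forcing $\max\{0,g_i(\bar x)\}=0$ for every $i$, i.e., $\bar x \in S$.

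For the KKT conditions, I define the candidate multipliers $\lambda_i^{k_j} := \tau_{k_j}\max\{0,g_i(x^{k_j})\}\ge 0$, so that $\nabla P_{\tau_{k_j}}(x^{k_j}) = \nabla f(x^{k_j}) + \sum_i \lambda_i^{k_j}\nabla g_i(x^{k_j})$. For $i \notin I_+(\bar x)$ we have $g_i(\bar x)<0$ and hence, by continuity, $\lambda_i^{k_j}=0$ eventually. Stacking the remaining columns $\nabla g_i(x^{k_j})$, $i\in I_+(\bar x)$, into a matrix $A^{k_j}$ with limit $A^*$ of full column rank (by E-LICQ), the Gram matrix $(A^{k_j})^\top A^{k_j}$ is invertible for large $j$, so the Moore--Penrose left-inverse gives $\lambda_+^{k_j} \to \lambda_+^* := -((A^*)^\top A^*)^{-1}(A^*)^\top \nabla f(\bar x)$. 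Nonnegativity is preserved in the limit; complementary slackness holds because $g_i(\bar x)=0$ on $I_+(\bar x)$ and $\lambda_i^*=0$ elsewhere; and passing to the limit in the stationarity identity yields $\nabla f(\bar x)+\sum_i \lambda_i^*\nabla g_i(\bar x)=0$.

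The main obstacle I anticipate is the interface between the stochastic convergence mode and the deterministic penalty-method limit analysis: convergence in probability of the gradient norms alone does not grant a pathwise limit argument, so extracting the almost-sure subsequence via Lemma~\ref{lemma:durret} and then a further deterministic subsequence via compactness of $C$ is essential to recover a single-realization setting in which classical reasoning applies. A secondary, but technically delicate, point is the boundedness and convergence of the multiplier sequence $\lambda_i^{k_j}$, which is exactly where E-LICQ is exploited through the pseudoinverse representation; without full column rank of $A^*$ one can still conclude feasibility of $\bar x$, but the stronger KKT conclusion is not directly available.
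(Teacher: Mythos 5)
Your proposal is correct and follows essentially the same route as the paper's proof: Markov's inequality (Lemma~\ref{lemma:markov}) gives $\|\nabla P_{\tau_k}(x^k)\|\xrightarrow{P}0$, Lemma~\ref{lemma:durret} extracts the almost-surely convergent subsequence, a pathwise reduction plus compactness of $C$ yields $x^{k_j}\to\bar x$ with $\nabla P_{\tau_{k_j}}(x^{k_j})\to0$, feasibility follows by dividing the stationarity residual by $\tau_{k_j}$ and invoking E-LICQ, and the KKT conditions come from the multipliers $\lambda_i^{k_j}=\tau_{k_j}\max\{0,g_i(x^{k_j})\}$. The one step where you genuinely diverge is the boundedness and convergence of these multipliers: the paper argues by contradiction, normalizing $\lambda^{k_j}$, passing to a further subsequence, and using LICQ to force the unit-norm limit to vanish, after which it still needs one more subsequence extraction to get a convergent $\lambda^{k_j}$. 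You instead solve for the active-set multipliers explicitly through the Gram matrix, $\lambda_+^{k_j}=((A^{k_j})^\top A^{k_j})^{-1}(A^{k_j})^\top\bigl(\nabla P_{\tau_{k_j}}(x^{k_j})-\nabla f(x^{k_j})\bigr)$, which is valid since $A^{k_j}\to A^*$ of full column rank; this is slightly stronger and cleaner, as it identifies the limit as the unique least-squares multiplier and avoids any further subsequence. One inaccuracy in your closing remark should be fixed: full column rank of $A^*$ \emph{is} the E-LICQ at the (feasible) limit point, and your own feasibility argument depends on it --- linear independence of $\nabla g_i(\bar x)$, $i\in I_+(\bar x)$, is precisely what forces $\max\{0,g_i(\bar x)\}=0$ --- so it is not true that feasibility survives without that rank condition; at best one could substitute a weaker positive-linear-independence-type condition for the feasibility part alone.
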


\begin{proof}
	Let $\eta>0$. Recalling Lemma \ref{lemma:markov} and the assumption on sequence $\{x^k\}$ we can write
	$$\mathbb{P}\!\left(\|\nabla P_{\tau_k}(x^k)\| > \eta\right)
	\le \frac{\mathbb{E}[\|\nabla P_{\tau_k}(x^k)\|]}{\eta}
	\le \frac{\epsilon_k}{\eta}.$$
	Since $\epsilon_k\to 0$ we immediately get that
	$$\lim_{k\to\infty}\mathbb{P}\!\left(\|\nabla P_{\tau_k}(x^k)\|> \eta\right) = 0.$$
	Since $\eta$ is an arbitrary positive value, we can conclude that
	$\|\nabla P_{\tau_k}(x^k)\|\xrightarrow{P} 0.$
	Then, by Lemma \ref{lemma:durret}, there exists a subsequence $\{k_j\}\subseteq \{1, 2, \dots\}$ such that
	$\|\nabla P_{\tau_{k_j}}(x^{k_j})\| \xrightarrow{\text{a.s.}}0.$
	
	Now, let $\omega$ be any event from the probability-1 set where the limit holds, so that $\{x^{k_j}(\omega)\}$ is a sample-path such that $\nabla P_{\tau_{k_j}}(x^{k_j}(\omega)) \to 0.$ 
	Since $\{x^{k_j}(\omega)\}$ is contained within the compact set $C$, it admits a convergent subsequence 
	(still denoted $\{x^{k_j}(\omega)\}$ for simplicity)
	such that $x^{k_j}(\omega) \to \bar{x}(\omega)$. We assume that the E-LICQ holds at $\bar{x}(\omega)$.
	
	By the definition of $P_{\tau_k}$, we know that for $k_j\to \infty$ that
    \begin{gather*}
        \|\nabla f(x^{k_j}(\omega))+\tau_{k_j}\sum_{i=1}^m\max\{0,g_i(x^{k_j}(\omega))\}\nabla g_i(x^{k_j}(\omega))\|\to 0.
    \end{gather*}
	Recalling that $\tau_{k}\to\infty$, we can also observe that
    \begin{gather*}
        \frac{1}{\tau_{k_j}}\|\nabla f(x^{k_j}(\omega))+\tau_{k_j}\sum_{i=1}^m\max\{0,g_i(x^{k_j}(\omega))\}\nabla g_i(x^{k_j}(\omega))\|\to 0.
    \end{gather*}
	
	Since $\nabla f$, $\nabla g_i$s are continuous,
	in the limit along the convergent subsequence we get
	$\|\sum_{i=1}^m\max\{0,g_i(\bar{x}(\omega))\}\nabla g_i(\bar{x}(\omega))\| = 0,$
	i.e.,
    \begin{align*}
        \sum_{i=1}^m\max&\{0,g_i(\bar{x}(\omega))\}\nabla g_i(\bar{x}(\omega)) \\&= \sum_{i\in I_+(\bar{x}(\omega))}\max\{0,g_i(\bar{x}(\omega))\}\nabla g_i(\bar{x}(\omega)) = 0.
    \end{align*}
	By the E-LICQ, we know that vectors $\nabla g_i(\bar{x}(\omega))$, $i\in I_+(\bar{x}(\omega))$, are linearly independent, and thus $\max\{0,g_i(\bar{x}(\omega))\} = 0$ for all $i\in I_+(\bar{x}(\omega))$, i.e., there is no $i\in\{1,\ldots,m\}$ such that $g_i(\bar{x}(\omega))>0$. Hence $\bar{x}(\omega)\in S$.

	Now, let us go back to
    \begin{gather*}
        \|\nabla f(x^{k_j}(\omega))+\tau_{k_j}\sum_{i=1}^m\max\{0,g_i(x^{k_j}(\omega))\}\nabla g_i(x^{k_j}(\omega))\|\to 0,
    \end{gather*}
	and let, for every $k_j$ in the subsequence and every $i$, $\lambda_i^{k_j}(\omega) = \tau_{k_j}\max\{0,g_i(x^{k_j}(\omega))\}$. The sequence $\{\lambda^{k_j}\}$ is bounded. In fact, assume by contradiction that $\|\lambda^{k_j}(\omega)\|\to \infty$ and let us define $\bar{\lambda}^{k_j}(\omega) = \lambda^{k_j}(\omega)/\|\lambda^{k_j}(\omega)\|$. The sequence $\{\bar{\lambda}^{k_j}(\omega)\}$ is bounded by definition, as $\|\bar{\lambda}^{k_j}(\omega)\| = 1$ for all $k_j$. Dividing the argument of the above limit by $\|{\lambda}^{k_j}(\omega)\|$ and taking the limits, along a further subsequence where $\bar{\lambda}^{k_j}(\omega)\to \bar{\lambda}(\omega)$ if needed, recalling $\|{\lambda}^{k_j}(\omega)\|\to \infty$ and the continuity of $\nabla f$ and $\nabla g_i$s, we get
	$\|\sum_{i=1}^m\bar{\lambda}_i(\omega)\nabla g_i(\bar{x}(\omega))\|=0,$
	i.e., $$\sum_{i=1}^m\bar{\lambda}_i(\omega)\nabla g_i(\bar{x}(\omega)) = 0.$$
	We shall note that $\lambda_i^{k_j}(\omega)\ge 0$ by definition for all $i$ and $k_j$; $\bar{\lambda}^{k_j}(\omega)$ are then also all nonnegative. Hence, in the limit we have $\bar{\lambda}_i(\omega)\ge 0$ for all $i$. Moreover, for all $i\notin I(\bar{x}(\omega))$ we will have $\bar{\lambda}^{k_j}_i(\omega) = 0$ for all $k_j$ sufficiently large, so that $\bar{\lambda}_i(\omega) = 0$ for all $i\notin I(\bar{x}(\omega))$. But then 
	$$\sum_{i\in I(\bar{x}(\omega))}\bar{\lambda}_i(\omega)\nabla g_i(\bar{x}(\omega)) = 0,$$
	which by the LICQ  is only possible if $\bar{\lambda}_i(\omega) = 0$ for all $i\in I(\bar{x}(\omega))$, but then $\bar{\lambda}(\omega) = 0$, which is absurd since it is the limit of a sequence of unit vectors.
	
	Hence, we have $\{\lambda^{k_j}(\omega)\}$ is a bounded sequence. Taking the limits in $$\|\nabla f(x^{k_j}(\omega))+\sum_{i=1}^m\lambda^{k_j}(\omega)\nabla g_i(x^{k_j}(\omega))\|\le\epsilon_{k_j},$$ along a further subsequence if needed where $\lambda^{k_j}(\omega)\to \bar{\lambda}(\omega)$, recalling the continuity of  of $\nabla f$ and $\nabla g_i$s, we get
	\begin{equation}
		\label{prf:kkt_3}
		\nabla f(\bar{x}(\omega))+\sum_{i=1}^{m}\bar{\lambda}_i(\omega)\nabla g_i(\bar{x}(\omega)) = 0,
	\end{equation}
	with $\bar{\lambda}(\omega)\ge 0$ by definition and $g(\bar{x}(\omega))\le 0$ by the feasibility result proven above. We can also note that $\lambda_i^{k_j}(\omega)=0$ for all $i\notin I(\bar{x}(\omega))$ for all $k_j$ sufficiently large, and thus $\bar{\lambda}_i(\omega) = 0$ for all $i\notin I(\bar{x}(\omega))$. We therefore have
	\begin{equation}
		\label{prf:kkt_4}
		\bar{\lambda}_i(\omega)g_i(\bar{x}(\omega)) = 0 \text{ for all }i.
	\end{equation}
	Putting together feasibility of $\bar{x}(\omega)$, $\bar{\lambda}(\omega)\ge 0$ and \eqref{prf:kkt_3}-\eqref{prf:kkt_4}, we can conclude that the accumulation point $\bar{x}(\omega)$ is a KKT point of the original problem.

	Since an event $\omega$ such that $\nabla P_{\tau_k}(x^{k_j})\to 0$ occurs almost surely, the proof is thus complete.
\end{proof}

\section{Computational Experiments}
The sequential penalty approach discussed in this work was computationally evaluated taking into account two image processing applications, in which the presence of a strict requirement on the model behavior can be naturally expressed introducing a set of constraints on the training data, resulting in instances of the form \eqref{eq:gen_prob}. Our sequential penalty method is compared to the classical approach of considering an additional term in the loss with fixed weight, that accounts for the additional requisite. 
The code is available at \href{https://github.com/dadoPuccio/ConstrainedLearning}{github.com/dadoPuccio/ConstrainedLearning}.

In the experiments reported in this section we will be considering a linear penalty function for our method. In preliminary experiments, we in fact observed that the quadratic penalty exhibits a less stable behavior in practice.

\subsection{Preliminary algorithm study}
\label{sec:expA}
The first experiment consists in a classification task on the MNIST database of handwritten digits using a multi-layer perceptron, where we additionally impose that the hidden representation can be used to reconstruct the original image in a trainable decoder-like branch of our network, so that the reconstructed image and the original image are close enough in terms of the mean squared error (MSE) on pixels values. More precisely, the $28\times28$ input image $I_j$ goes through two hidden layers of 256 and 20 ReLU-activated units respectively, producing the encoded 20-dimensional sample $v_j$. Then, $v_j$ is processed by two branches of the model to produce the classification prediction $\hat{y}_j$, using a fully connected layer of 10 units with softmax activation, and the reconstructed image $\hat{I}_j$ through two fully connected layers of 256 and 784 units using ReLU and sigmoid activation respectively. The network architecture can be visualized in Figure \ref{fig:penaltyMNIST}. 

\begin{figure}[htbp]
	\centering
	\includegraphics[width=0.7\linewidth]{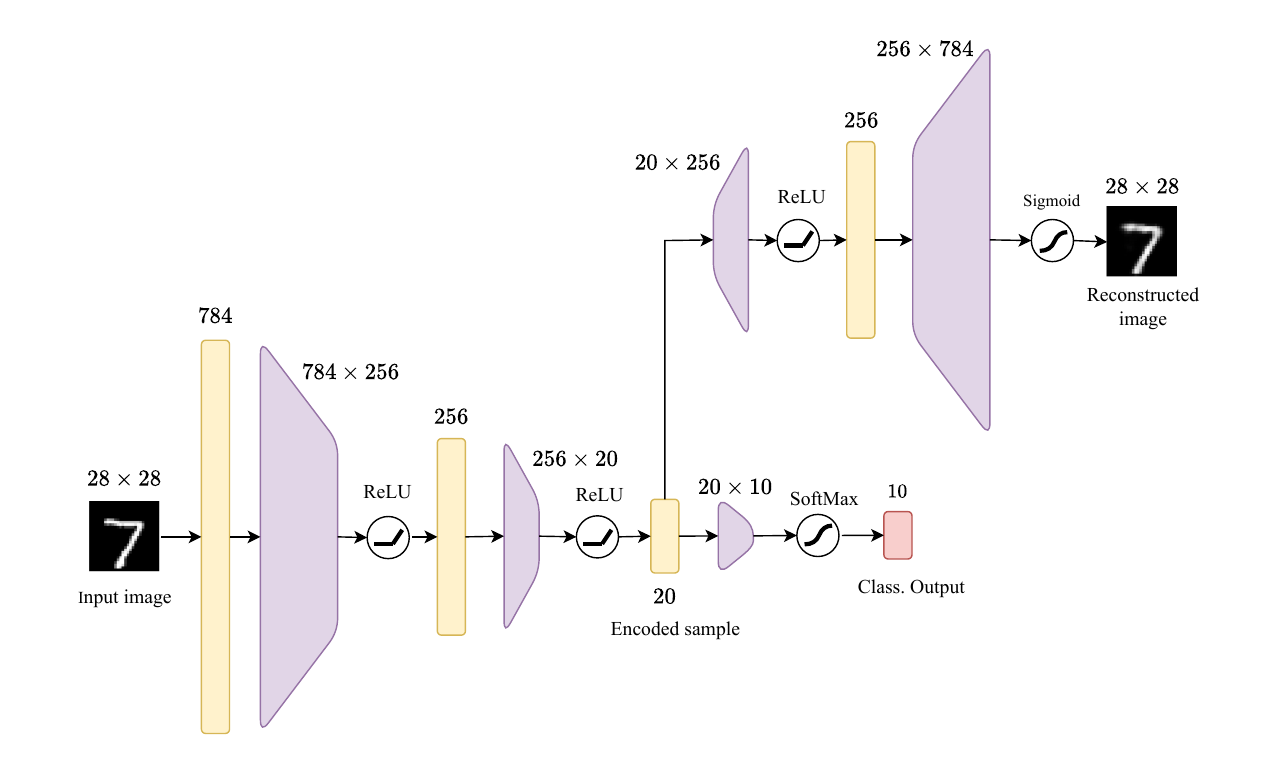}
	\caption{Network architecture for the toy problem: input images go through two fully connected layers mapping to a 20-dimensional encoding, then forwarded to distinct branches to get class predictions and the reconstructed images.}
	\label{fig:penaltyMNIST}
\end{figure}

In our setting we would like to train the model to obtain the best possible classification performances, measured with the cross entropy loss (CE), while having a reconstruction error below a certain threshold. Therefore, the problem can be formalized as follows
\begin{gather*}
    \min_{x \in \mathbb{R}^n} \frac{1}{N} \sum_{j=1}^N \ell_{\text{CE}}(y_j, \hat{y}_j) \quad \text{s.t.} \quad \ell_{\text{MSE}}(I_j, \hat{I}_j) \leq \theta \quad \forall j = 1,\dots, N,
\end{gather*}
where $\ell_{CE}$ is the cross entropy loss for classification, $\ell_{MSE}$ is the pixel-wise mean squared error on image reconstruction and $\theta>0$ is the tolerated reconstruction error. 
For the sequential penalty approach, we choose to increase $\tau$ at the end of each epoch, using the update rule $\tau_{k+1} = \gamma \tau_k$, with $\gamma > 1$ and $\tau_0 > 0$.  

The sequential penalty scheme is compared to the classical fixed penalization approach, where the training problem is formalized as
$$\min_{x \in \mathbb{R}^n} \frac{1}{N} \sum_{j=1}^N \ell_{\text{CE}}(y_j, \hat{y}_j) + \lambda \ell_{\text{MSE}}(I_j, \hat{I}_j), $$
for some $\lambda > 0$. 
To get a reference for the highest performance achievable for classification with this architecture,  we will also be considering the case of $\lambda=0$, where the decoder is ignored during the training procedure. 

For all approaches, optimization steps are always carried out by Adam \cite{kingma2014adam} with learning rate set to 0.001, weight decay of 0.001, $\beta_1=0.9$, $\beta_2=0.999$ and a batch size of 128. The encoding layers and the classification branch are warm-started providing a network pretrained for 5 epochs only considering the classification loss. Each considered method is then trained for 250 epochs. We set the maximal reconstruction threshold $\theta$ to $0.01$. 

In Table \ref{tab:preliminary} we report the train and the test performances of the models trained with the sequential penalty method and with the fixed regularization (possibly with $\lambda=0$). Together with the loss values for classification and reconstruction error, we report the classification accuracy, the average violation of the constraints $\ell_{\text{MSE}}(I_j, \hat{I}_j) \leq \theta$, and the percentage of satisfied constraints.  Multiple choices of $\tau_0$ and $\gamma$ in the sequential penalty method allowed to get good classification accuracy together with a large number of satisfied constraints both in the train and test set. Exceptions occur only for extreme choices. 

The fixed penalty approach on the other hand appears more delicate to tune, as changing the order of magnitude for $\lambda$ massively impacts the behavior of the learned model: for $\lambda=10$ the reconstruction error requirement is almost ignored, whereas for $\lambda=1000$ the classification performances are heavily sacrificed to obtain good reconstruction. 

\begin{table*}[t!]
    \centering
    \caption{Train and test performances of the sequential penalty method, of the fixed regularization method and, as a reference, of the model trained ignoring the reconstruction loss.  
	}
	\label{tab:preliminary}
	\centering
	\scriptsize
    \begin{tabular}{l|ccccc|ccccc|}
\toprule
& \multicolumn{5}{c|}{\textbf{Train Performances}} & \multicolumn{5}{c|}{\textbf{Test Performances}} \\ 
& $\ell_{\text{CE}}$ & Accuracy & $\ell_{\text{MSE}}$  & Violation & Satisfied & $\ell_{\text{CE}}$ & Accuracy & $\ell_{\text{MSE}}$  & Violation & Satisfied \\
\midrule
Classification Only & 0.039250 & 0.991400 & 0.112003 & 0.102003 & 0.000000 & 0.070265 & 0.978600 & 0.113958 & 0.103958 & 0.000000 \\ \midrule
Fixed ($\lambda = 10.0$) & 0.045508 & 0.986683 & 0.018016 & 0.008508 & 0.167400 & 0.075963 & 0.975200 & 0.017502 & 0.008028 & 0.179600 \\
Fixed ($\lambda = 100.0$) & 0.061058 & 0.982800 & 0.008594 & 0.001452 & 0.685383 & 0.094645 & 0.969700 & 0.008547 & 0.001445 & 0.687500 \\
Fixed ($\lambda = 1000.0$) & 0.148642 & 0.955667 & 0.006322 & 0.000533 & 0.853300 & 0.150207 & 0.955800 & 0.006648 & 0.000701 & 0.828100 \\ \midrule
Penalty ($\tau_0 = 100.0$, $\gamma = 1.005$) & 0.068917 & 0.980083 & 0.009030 & 0.001031 & 0.692967 & 0.093914 & 0.969400 & 0.009118 & 0.001212 & 0.680600 \\
Penalty ($\tau_0 = 100.0$, $\gamma = 1.01$) & 0.099660 & 0.970483 & 0.008142 & 0.000584 & 0.779433 & 0.113742 & 0.965600 & 0.008412 & 0.000917 & 0.739400 \\
Penalty ($\tau_0 = 100.0$, $\gamma = 1.02$) & 0.162459 & 0.951483 & 0.007706 & 0.000301 & 0.841500 & 0.158266 & 0.952600 & 0.008332 & 0.000911 & 0.741700 \\
Penalty ($\tau_0 = 10.0$, $\gamma = 1.01$) & 0.057573 & 0.984000 & 0.010872 & 0.002199 & 0.517933 & 0.091115 & 0.972500 & 0.010760 & 0.002182 & 0.533700 \\
Penalty ($\tau_0 = 50.0$, $\gamma = 1.01$) & 0.084600 & 0.975717 & 0.008681 & 0.000825 & 0.727467 & 0.106097 & 0.967500 & 0.008845 & 0.001072 & 0.704500 \\
Penalty ($\tau_0 = 200.0$, $\gamma = 1.01$) & 0.119088 & 0.964883 & 0.007960 & 0.000471 & 0.806517 & 0.126058 & 0.962300 & 0.008334 & 0.000876 & 0.750700 \\
Penalty ($\tau_0 = 1000.0$, $\gamma = 1.01$) & 0.184653 & 0.945317 & 0.007677 & 0.000273 & 0.845817 & 0.181098 & 0.947600 & 0.008324 & 0.000887 & 0.743500 \\
\bottomrule
\end{tabular}
\end{table*}

In Figure \ref{fig:mnist_density} we report the distribution of reconstruction errors $\ell_{MSE}(I_j, \hat{I}_j)$ across training and test data for the best performing setups of the sequential and fixed penalty approaches.  We observe for our proposed method two interesting insights: a) a smaller tail of large violations is obtained by asking to satisfy constraints instead of penalizing the objective; b) the model does not unnecessarily push  the reconstruction quality too far beyond the required threshold and, in particular, towards zero - avoiding needless accuracy drops.

\begin{figure*}[t!]
	\includegraphics[width=0.24\textwidth]{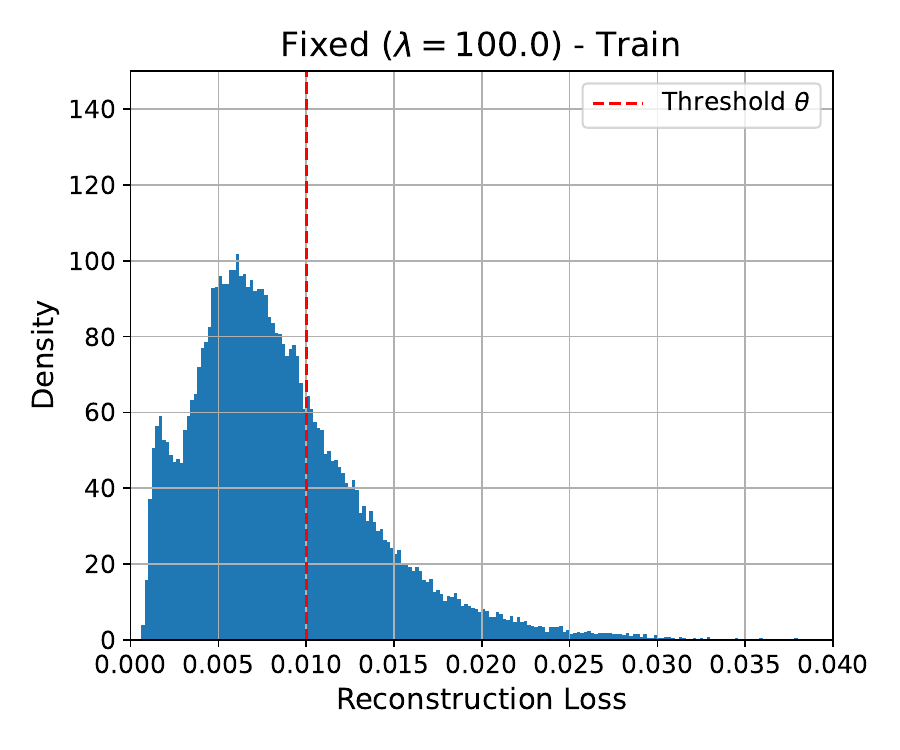}
	\includegraphics[width=0.24\textwidth]{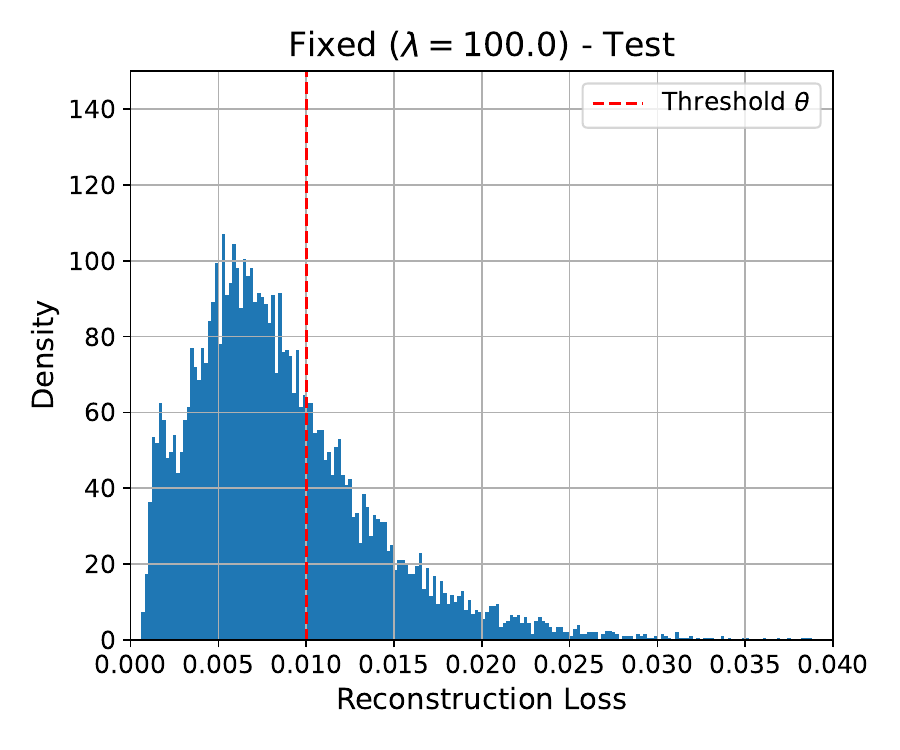}
		\includegraphics[width=0.24\textwidth]{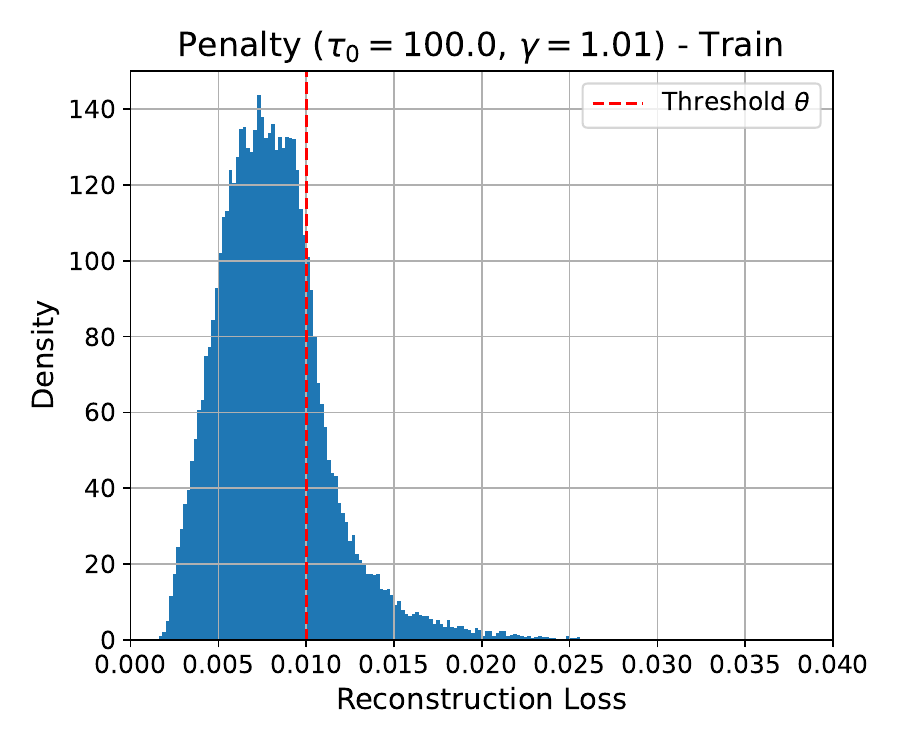}
	\includegraphics[width=0.24\textwidth]{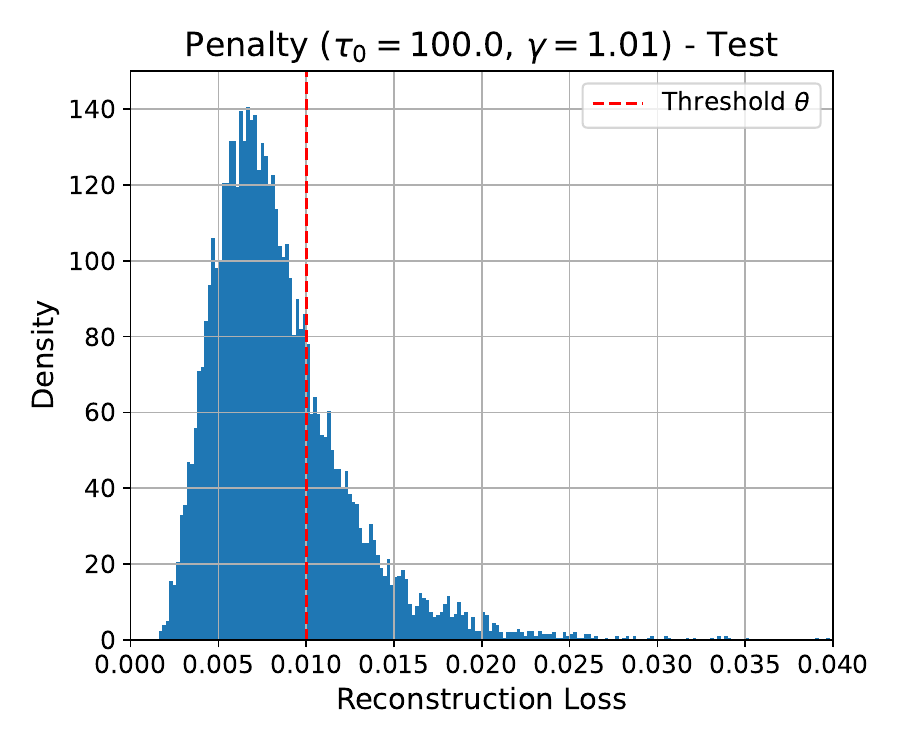}
	\caption{Train and test densities of the reconstruction loss for the sequential penalty method and for the fixed regularization method.}
	\label{fig:mnist_density}
\end{figure*}

\begin{figure*}[t!]
	\includegraphics[width=0.24\textwidth]{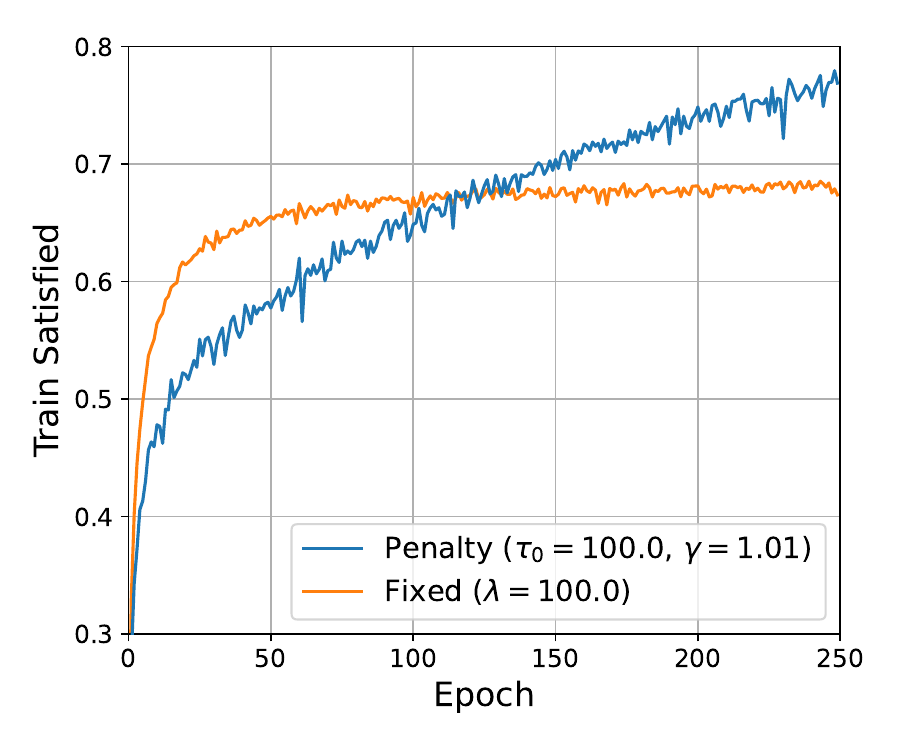}
	\includegraphics[width=0.24\textwidth]{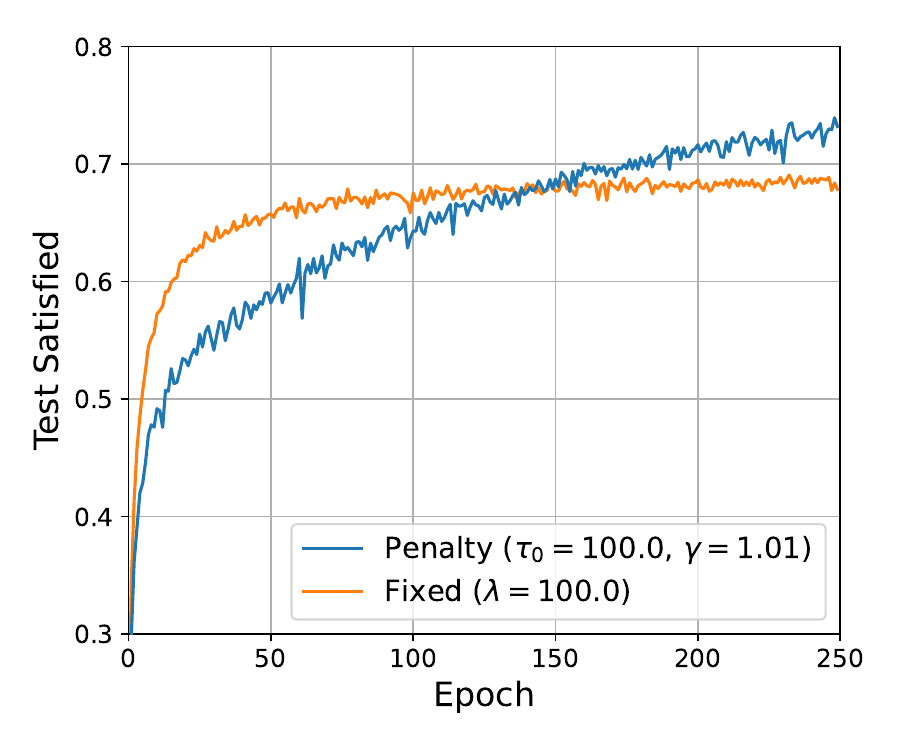}
	\includegraphics[width=0.24\textwidth]{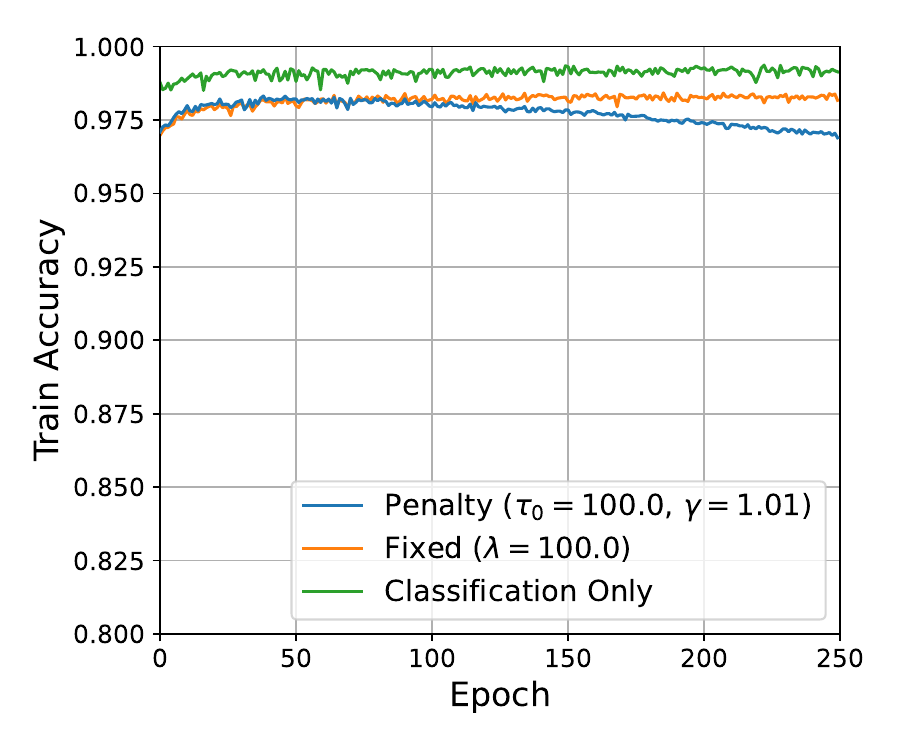}
	\includegraphics[width=0.24\textwidth]{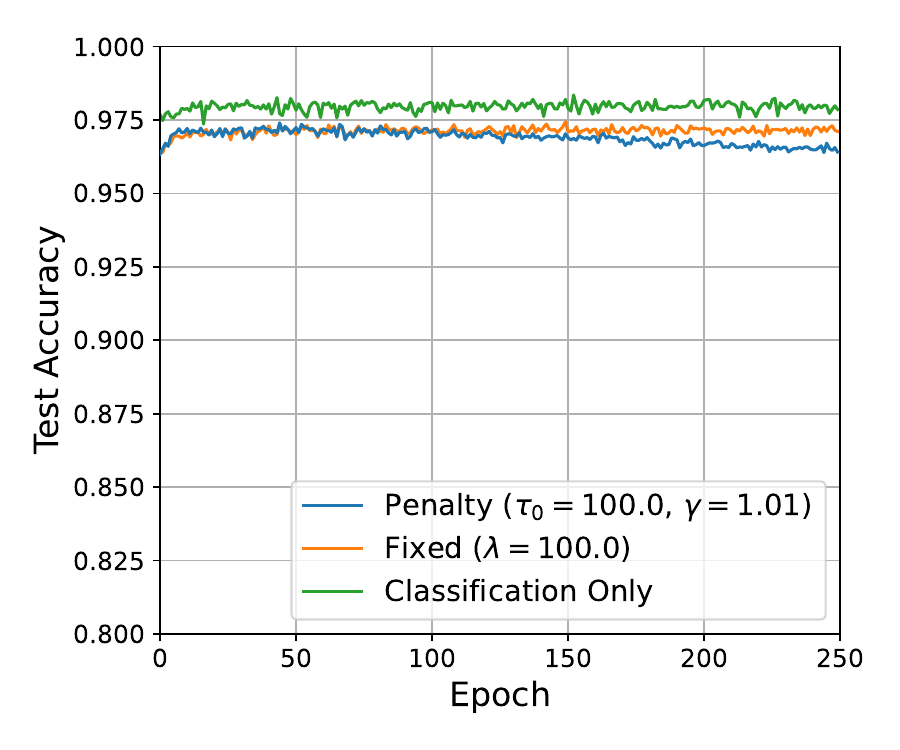}
	\caption{Train and test accuracy and percentage of satisfied constraints during the training with the sequential penalty method, the fixed regularization approach, and, as a baseline, only considering the classification loss.}
	\label{fig:mnist_comparison}
\end{figure*}

In Figure \ref{fig:mnist_comparison} we report the accuracy and the percentage of satisfied constraints in both the train and test set during the training with the sequential penalty method, with the fixed regularization approach and the classification-only model. Compared to the classical approach, the sequential method adapts to obtain in the end a high number of satisfied constraints as the penalty term increases, at the inevitable cost of a yet very limited decrease in classification accuracy.

\subsection{A case study: Medical Image Watermarking}
\label{sec:expB}
A second, more significant experiment focused on the watermarking of medical images. Digital watermarking refers to the process of embedding hidden information into multimedia content, such as images, through small and typically imperceptible modifications~\cite{podilchuk2002digital}. This technique has been successfully applied in a variety of real-world tasks, including copyright protection, traitor tracing, and metadata embedding. Whereas earlier watermarking methods relied on model-based algorithms grounded in signal processing theory, contemporary approaches frequently employ neural networks trained to encode and extract information while maintaining the perceptual quality of the underlying content.

This paradigm aligns naturally with our framework, as the training of such neural networks typically involves the joint optimization of two loss terms: one minimizing retrieval error for the embedded data, and the other maximizing the perceptual fidelity of the watermarked content. These objectives are inherently competing, since improving retrieval performance generally requires embedding more information, which in turn causes larger (and potentially perceptible) modifications to the original image. Balancing these goals is often challenging, as it usually depends on tuning a hyperparameter that lacks a clear semantic interpretation. This issue is especially problematic in domains such as medical imaging, where diagnostic images must satisfy stringent quality standards to ensure that their diagnostic utility is not compromised.

\begin{figure}
    \centering
    \includegraphics[width=0.7\linewidth]{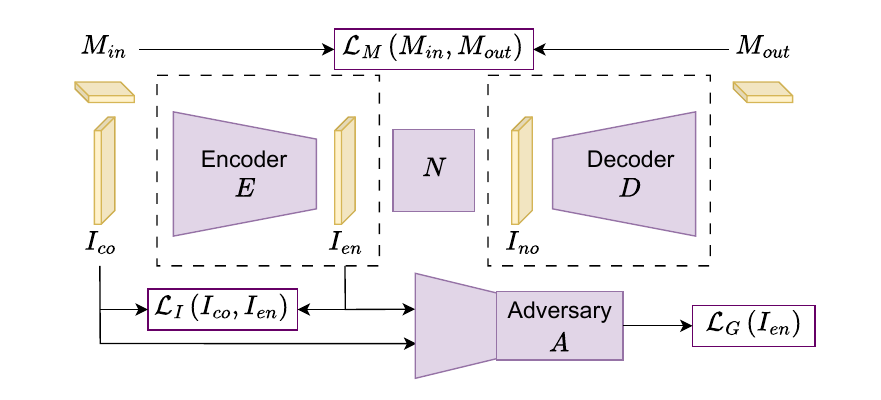}
    \caption{HiDDeN overview. Our proposed solution replaces $\mathcal{L}_I$ with a PSNR-based constraint.}
    \label{fig:hidden}
\end{figure}

To evaluate our approach in this context, we constructed an experimental scenario designed to highlight its advantages. In particular, we adapted HiDDeN~\cite{zhu2018hidden}, a well-established neural-network-based watermarking scheme, to operate within our proposed framework. This scheme, as in Figure~\ref{fig:hidden}, consists of an encoder that embeds a secret message $M_{in}$ into a cover image $I_{co}$ to produce a watermarked image $I_{en}$; a noise layer $\mathcal{N}(\cdot)$ that generates a possibly degraded version of $I_{en}$, $I_{no}=\mathcal{N}(I_{en})$; a decoder that retrieves a (potentially corrupted) message $M_{out}$ from $I_{no}$; and an adversarial discriminator trained to distinguish between $I_{en}$ and $I_{co}$. In its original formulation, the loss function for each sample
$$
\mathcal{L}_M \left( M_{in}, M_{out} \right)
+ \lambda_I \mathcal{L}_I \left( I_{co}, I_{en} \right)
+ \lambda_G \mathcal{L}_G \left( I_{en} \right)
$$
balances three terms: the message distortion loss $\mathcal{L}_M$, which measures the reconstruction error of the embedded message; the image distortion loss $\mathcal{L}_I$, which quantifies the degradation introduced during watermarking; and the adversarial loss $\mathcal{L}_G$. During training, the losses $\mathcal{L}_I$ and $\mathcal{L}_G$ encourage the network to minimize perceptual distortion, whereas $\mathcal{L}_M$ promotes robust message embedding, implicitly pushing the network to introduce larger modifications to the image. The trade-off between robustness and distortion is governed by the hyperparameters $\lambda_I$ and $\lambda_G$, which, however, lack clear semantic interpretation.

In our experiment, we replace the image distortion term $\mathcal{L}_I \left( I_{co}, I_{en} \right)$ with a constraint based on one of the most widely used metrics for assessing watermark imperceptibility: the Peak Signal-to-Noise Ratio (PSNR). PSNR quantifies the ratio between the maximum attainable power of a signal and the power of the noise that degrades its representation. Owing to the typically large dynamic range of image signals, PSNR is expressed in decibels. For two images $X$ and $Y$ 
, PSNR is defined as
$
\text{PSNR} \left( X, Y \right) = 10 \log_{10} ({\text{MAX}^2}/{\text{MSE} ( X, Y )} ),
$
where $\text{MAX}^2$ denotes the maximum possible pixel value of images $X$ and $Y$, and $\text{MSE} \left( X, Y \right)$ is the mean squared error between them.
Given that a high PSNR between the host image and the encoded image indicates strong perceptual similarity, it provides a semantically meaningful threshold: specifying a minimum PSNR value directly corresponds to enforcing a maximum allowable distortion level, making the constraint interpretable in terms of perceptual image quality.

Formally, the resulting training problem to be solved via sequential penalty is
\begin{align*}
  \min_w\;&\sum_{j=1}^N\mathcal{L}_M \left(M^j_{in}, M^j_{out}(w) \right)
+ \lambda_G \mathcal{L}_G \left(I_{en}^j(w) \right)  \\ 
\text{s.t. } &\text{PSNR}\left(I^j_{co}, I^j_{en}(w) \right)\ge C,\qquad \forall\, j=1,\ldots,N,
\end{align*}
where $C$ denotes the required PSNR threshold. The model is encouraged to consistently produce watermarked images with PSNR values exceeding $C$. 

To evaluate the effectiveness of our proposed strategy, we trained four different models employing the ChestX-ray8 dataset~\cite{wang2017chestxray}.
The dataset comprises \numprint{112120} frontal-view X-ray images of \numprint{30805} patients, with a native resolution $1024 \times 1024$. Each image is annotated with multiple labels including fourteen common thoracic pathologies.
All models were trained on a custom $70/15/15\,$ train/validation/test splits for $200$ epochs using Adam with learning rate $10^{-4}$ and batch size 32. Images were resized to $3 \times 224 \times 224$ to meet the input requirements of the HiDDeN architecture.  
During training, the noise layer $\mathcal{N}$ was set to the identity function, and the watermark message length was fixed to $L=200$ bits. We adopt the following notation to distinguish the experimental settings:
\begin{itemize}
    \item HiDDeN: baseline model, trained with weight factors $\lambda_{I} = 0.7$ and $\lambda_{G} = 10^{-3}$;
    \item PSNR$_{\geq C}$: proposed model with penalty coefficient $\tau$ increased by 10\% every 10 epochs and $\lambda_{G} = 10^{-3}$; we consider the quality threshold values $C = 30, 40, 50$.
\end{itemize}

Figure~\ref{fig:loss_psnr_train_val} reports the training and validation curves of message loss $\mathcal{L}_M \left( M_{in}, M_{out} \right)$ and $\text{PSNR} \left( I_{co}, I_{en} \right)$ for all models.
Compared to HiDDeN, the constrained models reach higher values on $\mathcal{L}_M \left( M_{in}, M_{out} \right)$, as increased imperceptibility necessarily trades off robustness.
Overall, the message loss curves are similar in both shape and values, indicating that the models have comparable performances. With respect to $\text{PSNR} \left( I_{co}, I_{en} \right)$, HiDDeN, shows moderate variance and remains centered around a $\text{PSNR}$ of approximately $30$ on both training and validation. In contrast, models trained with penalty loss show a consistent pattern: (average) $\text{PSNR}$ increases rapidly, finally converging to a plateau placed above the imposed threshold. This apparent overshoot is due to the fact that each sample is actually forced to meet the $\text{PSNR}$ constraint.
These plots demonstrate that the proposed incremental penalty strategy effectively enforces the desired image quality constraint, though at the expected cost of a higher bit error rate (BER) as the PSNR threshold increases, as shown in Figure \ref{fig:ber}.
Pixel-wise comparisons between the original and watermarked images in Figure \ref{fig:pixel_comp} reveal that watermark traces remain imperceptible, with visibility further decreasing at higher PSNR thresholds.
As a final experiment, we assessed the impact of watermarking on downstream pathology classification performance, employing a DenseNet-121 \cite{densenet121} multi-label classifier finetuned on \cite{wang2017chestxray}, with the same data split used in the previous comparison among models.
Specifically, Table \ref{tab:classifier} reports the AUROC scores of the classifier for each class on the original test set (denoted as \textit{Base Classifier}), as well as the change in AUROC observed when the classifier is applied to watermarked images generated by different model configurations, relative to the original images. 
Interestingly, images watermarked with the HiDDeN model exhibited a significant drop in AUROC across all classes, whereas those generated with the PSNR-constrained variants showed minimal degradation for some classes in PSNR$_{\geq 30}$ and no observable drop for higher thresholds.
These results indicate that the proposed constraint enables the model to produce watermarked images that preserve the diagnostic integrity of the originals.

\begin{figure}[ht!]
    \centering
	\includegraphics[width=0.24\textwidth]{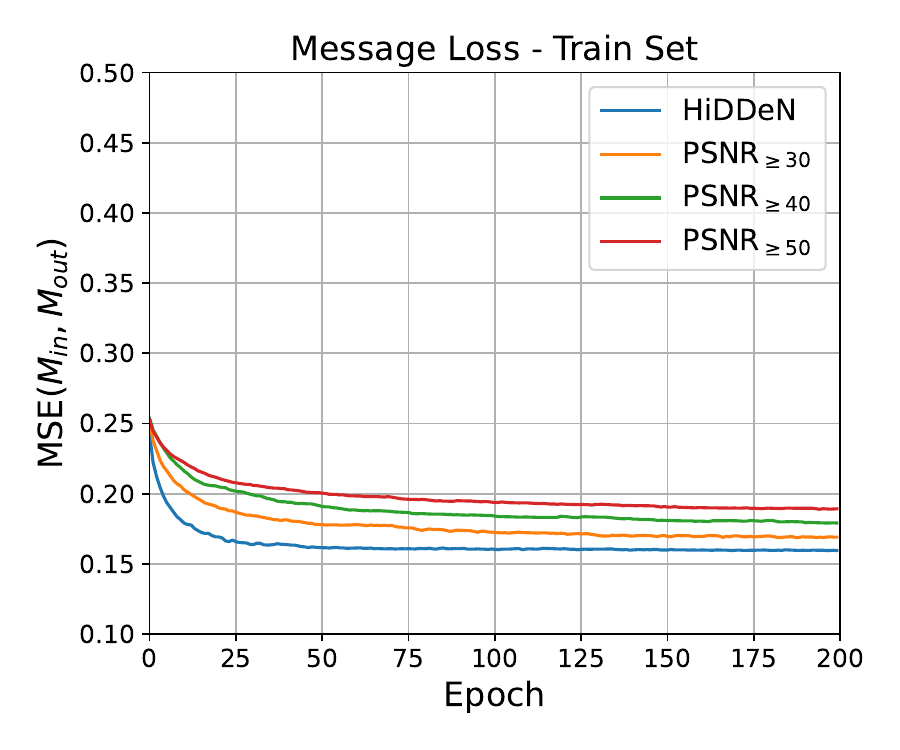}
	\includegraphics[width=0.24\textwidth]{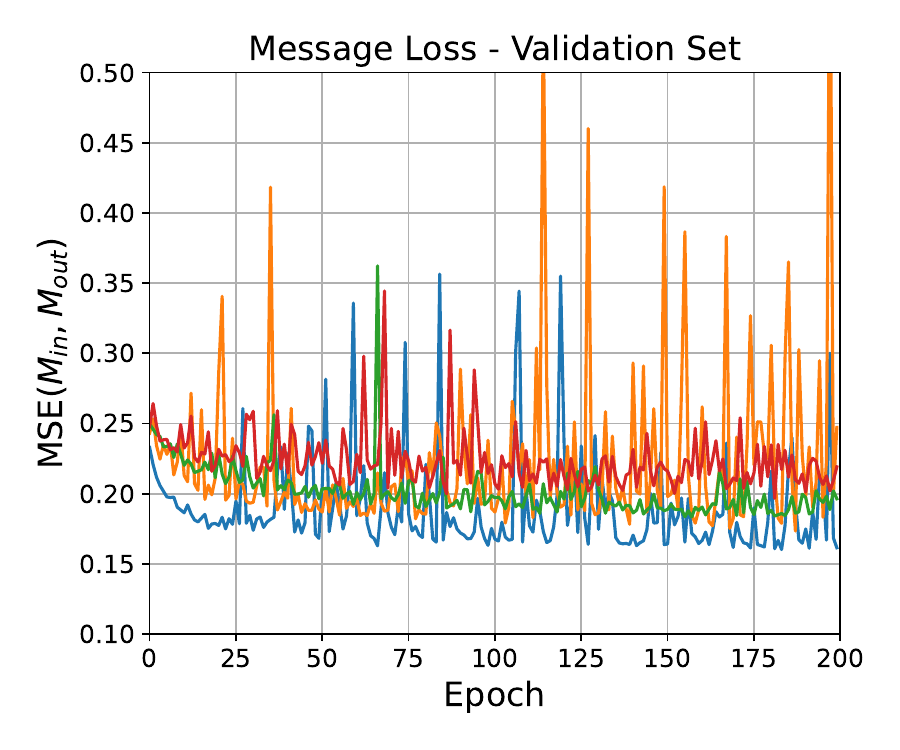}
    \includegraphics[width=0.24\textwidth]{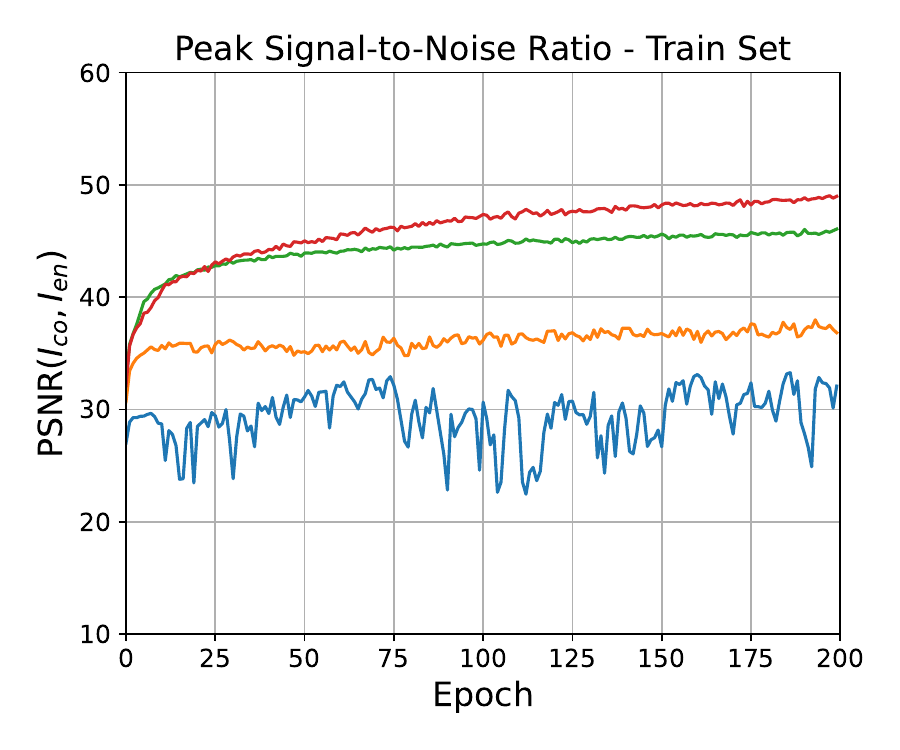}
	\includegraphics[width=0.24\textwidth]{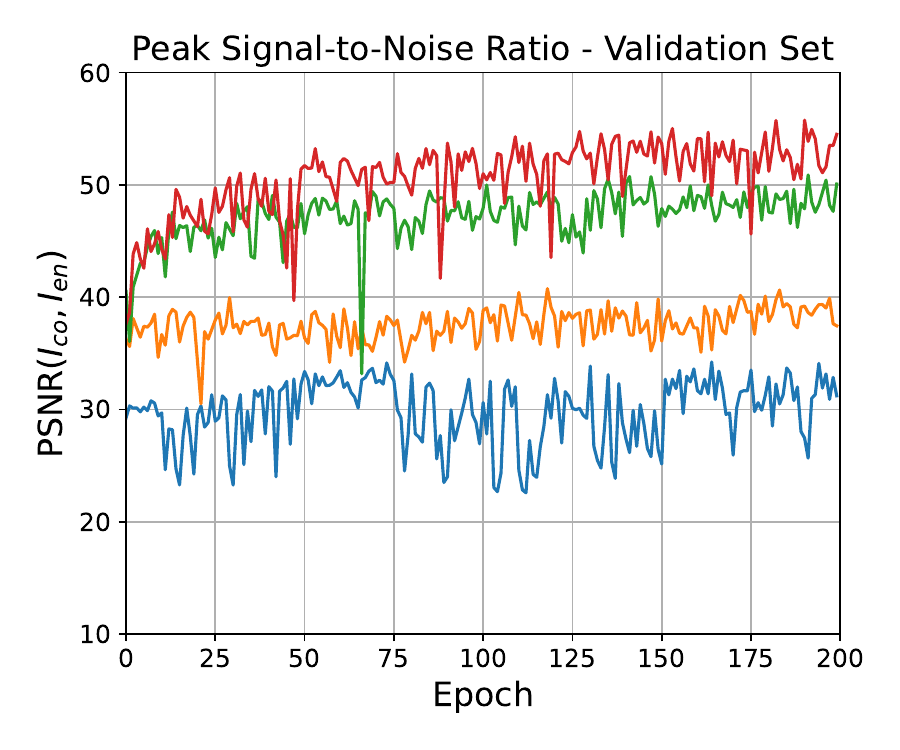}
	\caption{Training (left) and validation (right) curves of the message loss $\mathcal{L}_M \left( M_{in}, M_{out} \right)$ and $\text{PSNR} \left( I_{co}, I_{en} \right)$ for all model configurations.}  
	\label{fig:loss_psnr_train_val}
\end{figure}

\begin{figure}[]
    \centering
	\includegraphics[width=0.4\textwidth]{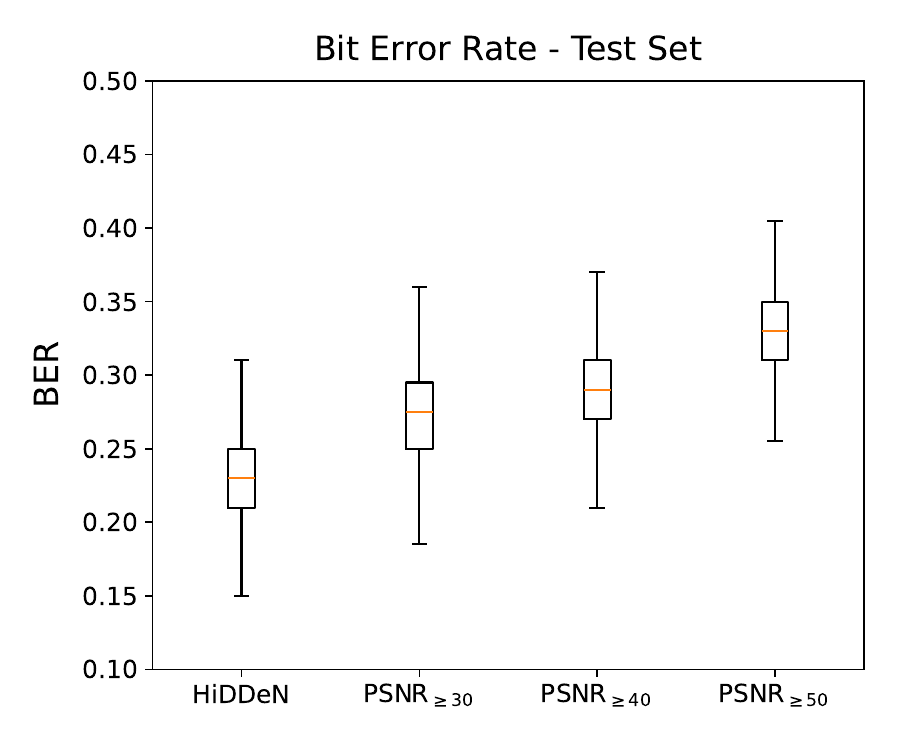}
	\caption{Bit Error Rate (BER) of the trained models on the test set.}
	\label{fig:ber}
\end{figure}

\begin{figure*}[ht!]
\centering
\includegraphics[width=\textwidth]{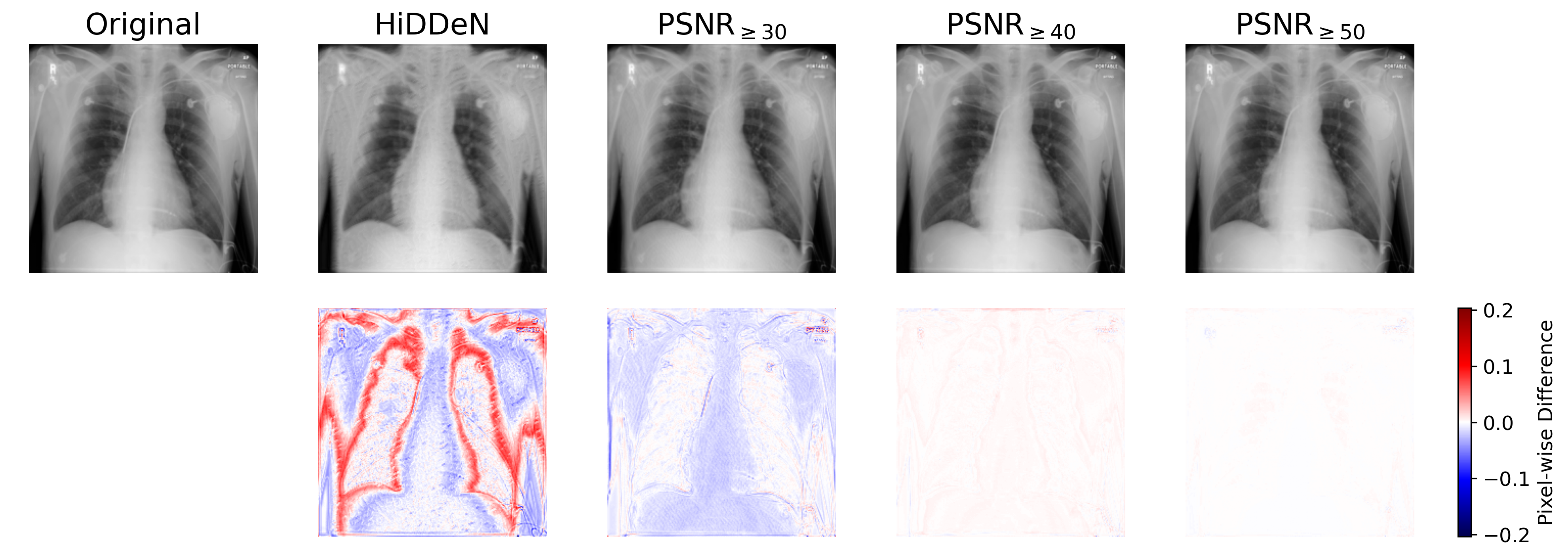}

\caption{Visual comparison of original and watermarked images for the four model configurations. For each watermarked image, a corresponding heatmap illustrates the pixel-wise differences with respect to the original image.}
\label{fig:pixel_comp}
\end{figure*}

\begin{table}
\centering
\caption{Per-pathology AUROC. DenseNet-121 baseline results and signed differences relative to the baseline for the four watermarking configurations.}\label{tab:classifier}
\begin{tabular}{l|c|cccc}
 & \rotatebox{90}{Base Classifier} & \rotatebox{90}{HiDDeN} & \rotatebox{90}{$\mathrm{PSNR}_{\geq30}$} & \rotatebox{90}{$\mathrm{PSNR}_{\geq40}$} & \rotatebox{90}{$\mathrm{PSNR}_{\geq50}$} \\
\midrule
Atelectasis & 0.771 & $-$0.077 & $-$0.018 & 0.000 & $+$0.001 \\
Cardiomegaly & 0.883 & $-$0.064 & $-$0.013 & $-$0.001 & 0.000 \\
Effusion & 0.832 & $-$0.054 & $-$0.009 & $-$0.001 & $-$0.001 \\
Infiltration & 0.710 & $-$0.038 & $-$0.001 & 0.000 & 0.000 \\
Mass & 0.813 & $-$0.114 & $-$0.021 & $-$0.001 & 0.000 \\
Nodule & 0.764 & $-$0.091 & $-$0.023 & $+$0.001 & 0.000 \\
Pneumonia & 0.711 & $-$0.054 & $-$0.010 & $-$0.001 & 0.000 \\
Pneumothorax & 0.881 & $-$0.158 & $-$0.015 & $-$0.001 & 0.000 \\
Consolidation & 0.749 & $-$0.080 & $-$0.015 & 0.000 & 0.000 \\
Edema & 0.850 & $-$0.036 & $-$0.004 & 0.000 & $-$0.001 \\
Emphysema & 0.910 & $-$0.200 & $-$0.016 & $-$0.002 & 0.000 \\
Fibrosis & 0.840 & $-$0.090 & $-$0.010 & $+$0.001 & 0.000 \\
Pleural\_Thickening & 0.781 & $-$0.066 & $-$0.009 & $-$0.001 & 0.000 \\
Hernia & 0.858 & $-$0.102 & $-$0.021 & $+$0.002 & $+$0.001 \\
\bottomrule
\end{tabular}
\end{table}


\section{Conclusions}
In this paper, we proposed and proved convergence results for a (stochastic) sequential penalty method tailored for learning problems where part of the requirements appear in the form of constraints of the underlying optimization problem. The approach is tested on image processing task, with particular emphasis on a watermarking application. The results show that the methodology can be successfully employed to handle these scenarios. Future research might in particular exploit the proposed method in other image processing applications.


\bibliographystyle{IEEEtran}
\bibliography{bibliography}

\newpage

 





\end{document}